\documentclass{article}

\usepackage[preprint]{icml2026}
\usepackage{booktabs}
\usepackage{multirow}
\usepackage{tikz}
\usepackage{amsmath}
\usepackage[binary-units]{siunitx}
\usepackage[abbreviations]{foreign}
\usepackage{xspace}
\usepackage[inline]{enumitem}
\usepackage[algo2e,ruled,lined,boxed,commentsnumbered, noend, linesnumbered, procnumbered]{algorithm2e}

\usepackage[utf8]{inputenc} %
\usepackage[T1]{fontenc}    %
\usepackage{hyperref}       %
\usepackage{url}            %
\usepackage{booktabs}       %
\usepackage{amsfonts}       %
\usepackage{nicefrac}       %
\usepackage{microtype}      %
\usepackage{xcolor}         %
\usepackage{graphicx}
\usepackage{subcaption}
\usepackage{xspace}

\usepackage{pifont}%
\usepackage{ifthen}
\usepackage{acronym}
\acrodef{DL}{decentralized learning}
\acrodef{ML}{machine learning}
\acrodef{D-PSGD}{decentralized parallel stochastic gradient descent}
\acrodef{FL}{federated learning}
\acrodef{SGD}{stochastic gradient descent}
\acrodef{IID}{independent and identically distributed}
\acrodef{non-IID}{non independent and identically distributed}
\acrodef{RMSE}{root mean square error}
\acrodef{RMW}{random model walk}
\acrodef{GL}{gossip learning}
\acrodef{EL}{epidemic learning}
\acrodef{DWT}{discrete wavelet transform}
\acrodef{FFT}{fast Fourier transform}
\acrodef{MI}{mutual information}
\acrodef{DP}{differential privacy}
\acrodef{VN}{virtual node}
\acrodef{RN}{real node}
\acrodef{LDP}{local differential privacy}
\acrodef{PNDP}{pairwise network differential privacy}
\acrodef{PNLDP}{pairwise network local differential privacy}
\acrodef{GI}{gradient inversion}
\acrodef{CML}{collaborative machine learning}
\acrodef{TPR}{true positive rate}
\acrodef{FPR}{false positive rate}

\acrodef{LA}{linkability attack}
\acrodef{GIA}{gradient inversion attack}
\acrodef{MIA}{membership inference attack}
\acrodef{AIA}{attribute inference attack}
\acrodef{ROC}{receiver operating characteristic}
\acrodef{AUC}{area under the ROC curve} %
\newcommand{\sys}{Mosaic Learning\xspace}

\newcommand{\DL}{\ac{DL}\xspace}

\newcommand{\EL}{\ac{EL}\xspace}

\newcommand{\shakespeare}{\textsc{Shakespeare}\xspace}
\newcommand{\cifar}{\textsc{CIFAR-10}\xspace}
\newcommand{\cifarhundred}{\textsc{CIFAR-100}\xspace}

\newcommand{\movielens}{\textsc{MovieLens}\xspace}

\newcommand{\dpsgd}{{\xspace}\ac{D-PSGD}\xspace}

\newcommand{\niid}{\ac{non-IID}\xspace}

\graphicspath{ {figures/} }
\usepackage{tikz}
\usepackage[eulergreek]{sansmath}
\usepackage{pgfplots}
\usepackage{pgfplotstable}
\usepackage[capitalize,noabbrev]{cleveref}
\usepackage{comment}
\crefname{assumption}{assumption}{assumptions}

\pgfplotsset{compat=newest}
\usepgfplotslibrary{external,units,colorbrewer,groupplots,fillbetween,statistics}
\tikzsetexternalprefix{figures/}
\tikzset{external/mode=list and make}
\usetikzlibrary{patterns,shapes.misc}

\makeatletter

\newcommand{\newgroupwidth}[2]%
{\expandafter\xdef\csname groupwidth#1\endcsname{#2}}

\newcounter{groupwidth}
\newsavebox{\groupwidthbox}
\makeatletter
{\edef\groupnumber{#1}%
	\stepcounter{groupwidth}%
	\@ifundefined{groupwidth\thegroupwidth}{\pgfmathsetlengthmacro{\mywidth}{\linewidth/\groupnumber}}%
	{\expandafter\let\expandafter\mywidth\csname groupwidth\thegroupwidth\endcsname}%
	\begin{lrbox}{\groupwidthbox}%
		\tikzset{/pgfplots/width={\mywidth}}%
		\ignorespaces}%
	{\end{lrbox}%
	\usebox\groupwidthbox
	\pgfmathsetlengthmacro{\mywidth}{\mywidth + (\linewidth - \wd\groupwidthbox)/\groupnumber}
	\immediate\write\@auxout{\string\newgroupwidth{\thegroupwidth}{\mywidth}}}
\makeatother
\usepackage{amsmath,amssymb,amsfonts,amsthm}
\usepackage{physics} %
\usepackage{enumitem}
\usepackage{thm-restate}
\usepackage{bbm} %
\usepackage{stmaryrd} %
\usepackage{dsfont}

\theoremstyle{definition}

\theoremstyle{remark}
\newtheorem{remark}{Remark}
\newtheorem{theorem}{Theorem}
\newtheorem{lemma}[theorem]{Lemma}

\newtheorem{assumption}{Assumption}

\allowdisplaybreaks

\newcommand{\R}{\mathbb{R}}

\newcommand{\espsmallcond}[2]{\mathbb{E}_{#1}\left[#2\right]}
\newcommand{\esp}[1]{\espsmallcond{}{#1}}

\newcommand{\transpose}[1]{#1^{\top}}
\newcommand{\half}{\nicefrac{1}{2}}

\newcommand{\nbparams}{d}

\newcommand{\identity}[1]{I_{#1}}

\newcommand{\modelspace}[0]{\R^{\nbparams}}

\newcommand{\lr}[1]{\eta_{}}
\newcommand{\globalloss}[0]{F}
\newcommand{\globalgradient}[0]{\nabla{}\globalloss{}}
\newcommand{\localloss}[1]{F_{#1}}
\newcommand{\localgradient}[1]{\nabla{} \localloss{#1}}
\newcommand{\smoothnessconstant}[0]{L}

\newcommand{\integerinterval}[1]{\left\llbracket#1\right\rrbracket}

\newcommand{\model}[2]{x_{#1}^{#2}}

\newcommand{\datapoint}[2]{\xi_{#1}^{#2}}
\newcommand{\boundstochasticnoise}[0]{\sigma}
\newcommand{\boundheterogeneity}[0]{\mathcal{H}}

\newcommand{\kronecker}[0]{\otimes}
\newcommand{\commutingmatrix}[2]{\mathbf{K}^{(#1,#2)}}
\newcommand{\chunkoperator}[0]{\mathcal{C}}
\newcommand{\chunk}[1]{\chunkoperator\left(#1\right)}
\newcommand{\chunkproj}[1]{\Pi^{(#1)}}

\newcommand{\one}[0]{\mathds{1}}

\newcommand{\E}[0]{\mathbb{E}}
\makeatletter
\AtBeginDocument{%
	\def\ltx@label#1{\cref@label{#1}}%

	\def\label@in@display@noarg#1{\cref@old@label@in@display{#1}}%

	\def\label@in@mmeasure@noarg#1{%
		\begingroup
		\measuring@false
		\cref@old@label@in@display{#1}%
		\endgroup
	}%
}
\makeatother

\begin{document}

	\twocolumn[
	\icmltitle{Mosaic Learning: A Framework for Decentralized Learning \\with Model Fragmentation}
	\icmltitlerunning{Mosaic Learning: A Framework for Decentralized Learning with Model Fragmentation}

	\icmlsetsymbol{equal}{*}

	\begin{icmlauthorlist}
		\icmlauthor{Sayan Biswas}{epfl}
		\icmlauthor{Davide Frey}{inria}
		\icmlauthor{Romaric Gaudel}{inria}
		\icmlauthor{Nirupam Gupta}{univcopenhagen}
		\icmlauthor{Anne-Marie Kermarrec}{epfl}
		\icmlauthor{Dimitri Lerévérend}{equal,inria}
		\icmlauthor{Rafael Pires}{epfl}
		\icmlauthor{Rishi Sharma}{equal,epfl}
		\icmlauthor{François Taïani}{inria}
		\icmlauthor{Martijn de Vos}{epfl}
	\end{icmlauthorlist}

	\icmlaffiliation{epfl}{École Polytechnique Fédérale de Lausanne (EPFL), Switzerland}
	\icmlaffiliation{inria}{Univ Rennes, Inria, CNRS, IRISA, France}
	\icmlaffiliation{univcopenhagen}{University of Copenhagen, Denmark}

	\icmlcorrespondingauthor{Dimitri Lerévérend}{dimitri.lereverend@inria.fr}
	\icmlcorrespondingauthor{Rishi Sharma}{rishi.sharma@epfl.ch}

	\icmlkeywords{Machine Learning, ICML}

	\vskip 0.3in
	]

	\printAffiliationsAndNotice{}

\begin{abstract}

\Ac{DL} enables collaborative \ac{ML} without a central server, making it suitable for settings where training data cannot be centrally hosted.
We introduce \sys, a \DL framework that decomposes models into fragments and disseminates them independently across the network.
Fragmentation reduces redundant communication across correlated parameters and enables more diverse information propagation without increasing communication cost.
We theoretically show that \sys \begin{enumerate*}[label=\textit{(\roman*)}]
\item shows state-of-the-art worst-case convergence rate, and
\item leverages parameter correlation in an \ac{ML} model, improving contraction by reducing the highest eigenvalue of a simplified system.
\end{enumerate*}
We empirically evaluate \sys on four learning tasks and observe up to $12$ percentage points higher node-level test accuracy compared to \ac{EL}, a state-of-the-art baseline.
In summary, \sys improves \ac{DL} performance without sacrificing its utility or efficiency, and positions itself as a new \ac{DL} standard.

\end{abstract}

\section{Introduction}
\label{sec:intro}

\Acf{DL} allows nodes to collaboratively train a \acf{ML} model across multiple nodes while keeping their data local and without the need for a central server orchestrating the learning process~\cite{lian2017can}.
Typically, in each round of \ac{DL}, nodes first perform local training on their private datasets.
The resulting model updates, in their entirety, are then exchanged with neighboring nodes according to the communication topology and locally aggregated.
The aggregated model serves as the initialization for the subsequent round, and this process is repeated until convergence.
As \ac{DL} does not rely on a central coordinating entity (\eg a server), it is a scalable and robust approach that does not require trusting a single entity. %

Since the introduction of \dpsgd, the canonical algorithm for \ac{DL}, by Lian et al. in 2017~\cite{lian2017can}, many variants have been introduced to overcome various challenges that are innate to \dpsgd, such as learning under \niid data distributions~\cite{lian2017can,koloskova2020decentralized,devos2023epidemic}.
Among these, model fragmentation, the idea of splitting a model into smaller pieces that are sent to or collected by other nodes independently, has emerged as a particularly promising design pattern.
Prior approaches like \textsc{Shatter}~\cite{biswas2024noiseless}, \textsc{Yoga}~\cite{liu2023yoga} and \textsc{DivShare}~\cite{biswas2025boosting} demonstrate that fragmenting model updates can provide additional privacy protection by limiting the exposure of full models, mitigate asynchrony by reducing straggler effects, reduce communication cost, and improve convergence speed.
However, in these works, fragmentation is primarily treated as a means to address isolated system concerns, rather than as a first-class learning primitive.
As a result, a holistic understanding of the benefits, trade-offs, and learning dynamics induced by model fragmentation in \ac{DL} is still missing. In particular, how fragmenting 
models exactly affect the learning process in decentralized frameworks remains largely unexplored.

This work introduces \sys as a unified class of \ac{DL} algorithms with model fragmentation and, hence, develops a comprehensive understanding of how model fragmentation can enhance the learning process of \ac{DL}.
In summary, our contributions are as follows:
\begin{itemize}
    \item We introduce \sys, a novel unified \ac{DL} framework that partitions local models into discrete segments for independent distribution, facilitating more rapid and diverse information dissemination across the network (\Cref{sec:design}).
    \item  We establish theoretical convergence guarantees for \sys, proving that it matches the convergence rate of \ac{EL}~\cite{devos2023epidemic}, the current state-of-the-art \ac{DL} baseline (\Cref{subsec:theory convergence classical}). 
    \item We demonstrate that \sys accelerates learning in convex landscapes by mitigating information redundancy and fostering the sharing of uncorrelated model parameters in the network (\Cref{subsec:theory impact of chunks}). This analysis provides fundamental insights into the structural benefits that fragmentation-based approaches bring to decentralized optimization.
    \item Experimentally, we demonstrate across four learning tasks that \sys outperforms \ac{EL} by up to 12 percentage points in settings with high label heterogeneity, while maintaining parity with \ac{EL} in homogeneous (IID) settings (\Cref{sec:evaluation}).
\end{itemize}

\section{Background and Preliminaries}
\label{sec:prelims}

This works introduces a new \acf{DL} framework~\cite{lian2017can}.
We first describe \dpsgd, the standard approach to perform \ac{DL}.
Our work builds on \ac{EL}, a variant that improves the convergence rate of \dpsgd, and we describe the \ac{EL} workflow afterwards~\cite{devos2023epidemic}.

\subsection{Standard decentralized learning (\dpsgd)}
In \dpsgd, there is a system composed of $n$ nodes collaboratively training an \ac{ML} model of $d$ parameters.
Each node $i$ has access to a local dataset $\mathcal{D}_i$, owns a local model $x_t^{(i)}$ in iteration $t$, and aims to minimize a local objective function $f_i:\modelspace \times \mathcal{Z} \to\R_{\geq 0}$ defined over the model space $\R^d$ and a data space $\mathcal{Z}$.
The local datasets never leave the corresponding nodes' devices.
The learning objective is to minimize the average loss over all nodes; formally,
\begin{align}\label{eq:global objective}
    \min_{x\in\modelspace} F(x) = \frac{1}{n}\sum_{i=1}^{n} F_i(x)\nonumber
\end{align}
where $F_i(x) = \E_{\xi_i \sim \mathcal{D}_i} \left[f_i(x,\xi_i)\right]$ is the expected local loss at node $i$ over a minibatch $\xi_i$ sampled randomly from $\mathcal{D}_i$.

A training session with \dpsgd proceeds in $T$ iterations and let the local model held by node $i$ in any iteration $t$ $0\leq t\leq T$ be denoted by $x_t^{(i)}$. Each node $i$ starts by initializing a random $x_0^{(i)}\in \mathbb{R}^d$.
For iterations $ 1 \leq t \leq T-1 $, each node $ i $ trains its local model $x_t^{(i)}$ using \ac{SGD}, producing $x_{t+1/2}^{(i)}$.
Nodes communicate along a \emph{communication graph} in each iteration $t$, which is modeled by a communication matrix $W_t$.
In \dpsgd, $W_t=W_{t'}$ for every $0\leq t,t'\leq T-1$, \ie the communication matrix is the same across every iteration.
\Cref{fig:mosaic_learning} (left) shows the sending and receiving of models in \dpsgd.
After local training, $i$ sends this trained model to its neighbors given by $ W_t $, and receives trained models from all its neighbors in $ W_t $ (step 1, left).
Node $ i $ then integrates the received model parameters from its neighbors with its own using a weighted aggregation scheme (step 2, left).
It is typical to aggregate all the models with equal weights and yhe aggregated model is used as the local model in the next iteration $ t + 1 $. The process continues until convergence. 

\subsection{\Acf{EL}}
\Ac{EL} is a \ac{DL} algorithm that aims to spread information using an epidemic protocol~\cite{devos2023epidemic}.
Instead of keeping the gossip matrix $ W $ fixed across iterations, \ac{EL} randomizes $ W $ between rounds.
Thus, at each iteration, nodes asymmetrically send their model to a set of random neighbors.
This is captured by a communication matrix $W_t$ that is \emph{row stochastic} but not necessarily \emph{column stochastic}, \ie, $W_t \one_n = \one_n$, but $\transpose{\one_n} W_t \neq \one_n$.
\ac{EL} exhibits faster convergence compared to \dpsgd and related topology construction algorithms, which is shown both theoretically and empirically, making it a state-of-the-art \ac{DL} approach.

\begin{figure*}[t]
    \centering
    \includegraphics[width=\linewidth]{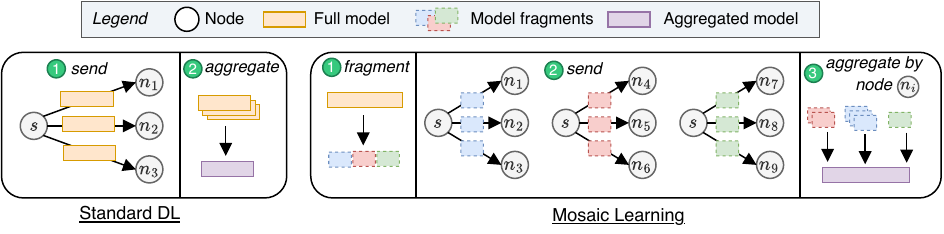}
    \caption{Sending and receiving models in standard \acf{DL} (left) and \sys (right).}
    \label{fig:mosaic_learning}
\end{figure*}

\begin{algorithm}[t]
    \caption{\sys from the view of node $i$}%
    \label{alg:system}
    \begin{algorithmic}[1]
        \STATE \textbf{Input:} no. of fragments $K$, stepsize $\lr{}$, iterations $T$, no. of local \ac{SGD} steps $H$
        \STATE \textbf{Init:} $x_0^{(i)}$ %
        \FOR{$t\in \integerinterval{0, T-1}$}
         \STATE \textbf{Init:} gossip matrices $\{W_t^{(k)}\}_{t,k}$ for $k=1,\ldots,K$
            \STATE $\tilde{x}_t^{(i,0)}\leftarrow x_t^{(i)}$
            \FOR{$h=0,\ldots,H-1$}
            \STATE Draw $\datapoint{t}{(i)}\sim \mathcal{D}_i$
            \STATE $\tilde{g}_t^{(i,h)} \leftarrow \nabla f_i(\tilde{x}_t^{(i,h)}, \datapoint{t}{(i)})$
            \STATE $\tilde{x}_{t}^{(i,h+1)}\leftarrow \tilde{x}_t^{(i,h)}-\lr{}\,\tilde{g}_t^{(i,h)}$
            \ENDFOR
            \STATE $x_{t+\half}^{(i)}\leftarrow \tilde{x}_{t+\half}^{(i,H)}$
            \FOR{$k=1,\dots,K$}
                \STATE \textbf{Share} fragment $k$ of $x_{t+\half}^{(i)}$ using $W_t^{(k)}$
                \STATE \textbf{Receive} fragment $k$ from other nodes 
            \ENDFOR
            \STATE $x_{t+1}^{(i)} \leftarrow$ (fragment-wise) avg. of received models
        \ENDFOR
        \STATE \textbf{Output:} $\{x_T^{(i)}\}_{i=1}^n$
    \end{algorithmic}
\end{algorithm}

\section{The \sys framework}
\label{sec:design}

Here, we present \sys, a unified \ac{DL} framework that leverages model fragmentation to facilitate more rapid and diverse information dissemination in the network.
We focus on model fragmentation as a first-class design choice because it exposes a fundamental degree of freedom in \ac{DL} algorithms that is largely orthogonal to existing algorithmic improvements and remains theoretically unexplored.
Prior works have shown that splitting models into fragments can be beneficial for enhancing privacy and handling asynchrony, however, these benefits have been studied in isolation and from a systems perspective~\cite{biswas2024noiseless,biswas2025boosting}. %
The main idea is that fragmentation directly affects how information propagates and mixes across the network during training.

To analyze the effects of model fragmentation, we present the \sys algorithm in \Cref{alg:system} (from the perspective of node $i$) and visualize the sending and receiving of model fragments in Mosaic Learning in \Cref{fig:mosaic_learning} (right).
On a high level, it conserves the usual structure of \ac{DL} algorithms where nodes continuously train their local models and share them with neighbors.
The algorithm starts by each node $ i $ initializing its local model $ x_0^{(i)} $ (line 2).
At the start of each iteration $ t $, we initialize gossip matrices $\{W_t^{(k)}\}_{t,k}$ for $k=1,\ldots,K$ (line 4).
For our theoretical analysis (\Cref{sec:analysis}), we make the same assumption as \ac{EL}, namely that $ W_t $ is row stochastic but not necessarily column stochastic.
A node then trains its current local model $x_t^{(i)}$ for $ H $ local SGD steps (line 6-10), resulting in the trained model $\tilde{x}_{t+\half}^{(i,H)}$ (line 11).

\textbf{Model Fragmentation.}
In contrast to \dpsgd and \ac{EL}, nodes in \sys fragment their models into $ K $ chunks using a mapping $\chunkoperator$ and send each chunk to $r$ neighbors.
Formally, we view fragmentation as a mapping $\chunkoperator: \integerinterval{1,d} \to \integerinterval{1,K}$ that assigns each parameter coordinate to a fragment.
Equivalently and more compactly, we work with orthogonal projectors $\chunkproj{k}:\R^d\to\R^d$ for $k=1,\dots,K$ that selects a subset of parameters from the vector $x$ associated to fragment $k$.
Formally, we have $(\chunkproj{k}x)[i] = (x)[i] \iff \chunkoperator(i) = k$, and 0 otherwise).
We consider \emph{disjoint} fragments, defined by:
\[
\chunkproj{k}\chunkproj{q}=0\ (k\ne q),\qquad \sum_{k=1}^K \chunkproj{k} = I_d.
\]

For simplicity, we assume each fragment has the same number of parameters ($\operatorname{tr}\left(\chunkproj{k}\right) = d/K$).
Furthermore, the fragmentation algorithm is fixed across iterations.
The parameters in fragment $k$ are shared along a \emph{distinct} communication matrix $W_t^{(k)}$, which enables better spread of information.
The model dissemination process in \sys is also shown in \Cref{fig:mosaic_learning} (right, step 2), in a setting with $ K=3$ and $r=3$, where a node $ s $ sends fragments to neighbors $ n_1 $ to $ n_9 $ in $ W_t^{(k)}$.
Unlike \dpsgd where $ s $ sends the full model to $ r=3 $ neighbors, with \sys $ s $ sends each of its $k=3$ model fragments to 3 neighbors (line 13 in \Cref{alg:system}).

\textbf{Fragment-wise Aggregation.}
With \sys, a node $ i $ within a single iteration $ t $ receives fragments from neighboring nodes (line 14).
Depending on the gossip matrix $ W_t^{(k)} $, nodes may receive a different number of fragments at different fragment indices. %
This is visualized in \Cref{fig:mosaic_learning} (right, step 3) where a node $ i $ receives two fragments at index 1, three fragments at index 2, and one fragment at index 3.
Node $ i $ then performs fragment-wise average:
\begin{align}\label{eq:update rule chunks for algo}
    \chunkproj{k} x_{t+1}^{(i)}:=  \sum_j W_t^{(k)}[i,j]\,\chunkproj{k} x_{t+\half}^{(j)}.
\end{align}

\textbf{Discussion.}
\sys induces the same communication footprint of classical \ac{DL}: although model updates are partitioned into fragments and disseminated independently, each node communicates the same total number of parameters per iteration as in \dpsgd or \ac{EL}.
\sys therefore alters the structure of information flow without increasing communication cost or introducing additional synchronization requirements.
Importantly, \sys does not rely on asynchrony, sparsification, or partial participation; it modifies only the gossip step while leaving the local optimization procedure unchanged.

\begin{remark}\label{rem:EL instance of MoL}
\ac{EL} corresponds to \sys with $ K = 1 $, sharing the entire model on a single communication graph.
\end{remark}
\section{Theoretical Analysis}
\label{sec:analysis}
In this section, we present the formal analysis to capture the benefits of model fragmentation in the learning aspects of \ac{DL}.
First, we show that \sys is guaranteed to converge as fast as existing literature, %
regardless of the number of fragments $K$. Further, we show that a higher $K$ results in faster consensus in simpler convex landscapes.
All results in this section hold independently of the fragmentation  $\chunkoperator$; \ie the specific heuristic used to generate the $K$ model fragments does not affect the convergence guarantees of \sys.

Our theoretical contributions are twofold:
\begin{enumerate*}[label=\textit{(\roman*)}]
\item We derive state-of-the-art worst-case convergence rates under multiple communication settings (\cref{subsec:theory convergence classical}).
\item Using a simplified formalism, we show that increasing the number of fragments improves consensus in the convex case (\cref{subsec:theory impact of chunks}).
\end{enumerate*}

We impose no specific constraints on the communication matrix, except for the general convergence analysis (\cref{subsec:theory convergence classical}), which requires $W_t$ to be undirected and regular. This aligns with standard assumptions in the decentralized optimization literature~\cite{devos2023epidemic,pmlr-v119-koloskova20a}. Notably, our analysis of \sys in the convex landscape (\cref{subsec:theory impact of chunks}) remains agnostic to the underlying network topology. We emphasize that all the results presented below hold regardless of whether the topology is dynamic ($W_t$ varies each round) or static ($W_t = W, \forall t$).

\subsection{General convergence}
\label{subsec:theory convergence classical}

First, we show the worst-case convergence rate of \sys matches prior work.
Intuitively, the number of fragments $K$ does not affect the average behavior of gossip: although we have $K$ different (independent) gossip matrices, per-fragment matrices share the same distribution and their aggregated expectation equals the single-fragment operator. Thus, the usual convergence analysis applies \emph{per-fragment}.

We proceed by considering the standard assumptions in distributed optimization~\cite{lian2017can,pmlr-v119-koloskova20a,devos2023epidemic}.
For completeness, we refer the reader to~\cref{sec:assumptions} in the Appendix for a detailed description of these assumptions.
Under these assumptions, we derive state-of-the-art convergence guarantees for \sys with the following theorem.
\begin{restatable}[Convergence of \sys]{theorem}{convergenceEpidemic}\label{thm:convergence of sys}
    Consider the typical assumptions of \begin{enumerate*}[label=\textit{(\roman*)}]
        \item smoothness,
        \item bounded stochastic noise, and
        \item bounded heterogeneity
    \end{enumerate*}. Consider \sys as described in~\cref{alg:system} with $K$ fragments.
    Then, with an appropriate choice of stepsize $\lr{}$ and after $T$ iterations, it holds that $\frac{1}{nT} \sum_{i=1}^{n}\sum_{t=0}^{T-1}\E\left[\norm{\nabla F\left(x_t^{(i)}\right)}^2\right] \leq \epsilon$ as long as $\epsilon$ and $\lr{}$ are bounded by the same rate as in \ac{EL}~\cite{devos2023epidemic}, independently of the number of fragments $K$.
\end{restatable}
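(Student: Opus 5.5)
The plan is to reduce the analysis of \sys to the analysis of \ac{EL} by working fragment by fragment. The \ac{EL} proof~\cite{devos2023epidemic} rests on two ingredients. The first is a descent inequality for the averaged iterate $\bar{x}_t = \frac{1}{n}\sum_i x_t^{(i)}$. The second is a recursion for the consensus distance $\Xi_t = \frac{1}{n}\sum_i \E\norm{x_t^{(i)}-\bar{x}_t}^2$. That recursion uses only two facts about the random gossip matrix: (a) $\E[W_t]$ is doubly stochastic, or more precisely $\E[\transpose{\one_n} W_t]/n$ leaves the average unbiased; and (b) a contraction bound of the form $\E\norm{(W_t - \tfrac{1}{n}\one_n\transpose{\one_n})Z}_F^2 \le (1-p)\norm{Z - \bar Z}_F^2$ for a matrix $Z$ whose rows are node vectors, with $p$ the \ac{EL} mixing constant. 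So I will show that both facts hold for the block-structured mixing operator of \sys with the same constants, and then rerun the \ac{EL} argument verbatim.

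\textbf{Step 1: per-fragment operator.} Write the stacked iterate as $X\in\R^{n\times d}$. The update \eqref{eq:update rule chunks for algo} then reads $X_{t+1} = \sum_{k=1}^K W_t^{(k)} X_{t+\half}\chunkproj{k}$. Since $\sum_k \chunkproj{k}=I_d$ and the projectors are orthogonal and disjoint, we get $\norm{Y}_F^2 = \sum_k \norm{Y\chunkproj{k}}_F^2$ for any $Y$. Moreover, column $j$ of $X_{t+1}$ evolves under the single matrix $W_t^{(\chunkoperator(j))}$.

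\textbf{Step 2: average dynamics.} The average update is $\bar{x}_{t+1} = \sum_k \frac{1}{n}\transpose{\one_n}W_t^{(k)}X_{t+\half}\chunkproj{k}$. Each $W_t^{(k)}$ has the same distribution as the \ac{EL} matrix, so taking expectations gives $\E[\bar{x}_{t+1}] = \bar{x}_{t+\half}$ exactly as in \ac{EL}. The extra variance that a non-column-stochastic $W$ adds to the average splits additively over fragments. Summing over $k$ reproduces the \ac{EL} bound on the full vector, because every fragment satisfies the \ac{EL} bound on its own coordinates. The descent lemma for $\bar x_t$, which uses smoothness and bounded noise, then goes through unchanged.

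\textbf{Step 3: consensus contraction.} Apply (b) to each $Z_k = X_{t+\half}\chunkproj{k}$ separately:
\begin{align*}
\E\norm{X_{t+1}-\bar X_{t+1}}_F^2 = \sum_{k}\E\norm{(W^{(k)}_t-\tfrac1n\one_n\transpose{\one_n})Z_k}_F^2 \le (1-p)\sum_k\norm{Z_k-\bar Z_k}_F^2 = (1-p)\norm{X_{t+\half}-\bar X_{t+\half}}_F^2 .
\end{align*}
This holds regardless of how the $W_t^{(k)}$ are correlated, since linearity of expectation is all that is needed. Because $\norm{\cdot}_F^2$ separates over the coordinate blocks, no cross terms between fragments appear. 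The local-step drift over $H$ SGD steps is untouched, since fragmentation happens only after training. Plugging this into the \ac{EL} consensus recursion, combined with bounded heterogeneity, gives the same bound on $\sum_t \Xi_t$.

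\textbf{Step 4: conclusion.} Combine the descent and consensus bounds. Relate $\frac1n\sum_i\norm{\nabla F(x_t^{(i)})}^2$ to $\norm{\nabla F(\bar x_t)}^2 + L^2\Xi_t$ by smoothness, then tune $\lr{}$ as in \ac{EL}. This yields the same $\epsilon$-rate, with no dependence on $K$.

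The main obstacle is Step 2 together with the precise form of (b) in \ac{EL}. Their contraction and variance constants are computed for the specific $r$-out random sampling on a full vector. I need to check that these constants are stated as inequalities valid for an arbitrary matrix $Z$. If so, applying them to the column-restricted $Z_k$ is legitimate. If they are instead stated only for the full model, I will first re-derive them for an arbitrary column block, which should be routine since the sampling acts only on rows.
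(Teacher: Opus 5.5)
Your proposal is correct and follows essentially the paper's route: the paper likewise re-proves EL's mixing lemma (unbiasedness of $\bar x_{t+1}$, the $\beta_s$-contraction of pairwise disagreement, and the bound on $\E\norm{\bar x_{t+1}-\bar x_{t+1/2}}^2$) by applying Lemma~1 of~\cite{devos2023epidemic} on each fragment's coordinates and summing via orthogonality of the $\chunkproj{k}$, then reruns the EL proof verbatim. One harmless slip: the first ``='' in your Step~3 display should be ``$\le$'', since $(W_t^{(k)}-\tfrac1n\one_n\transpose{\one_n})Z_k$ measures deviation from the pre-gossip average rather than from $\bar X_{t+1}$; the two differ by a consensus-space component, so dropping that component only tightens the bound and the argument still goes through.
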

\Cref{thm:convergence of sys} shows that in the worst-case scenario, fragmentation is at least as good as existing methods.

\begin{remark}
    The worst-case convergence rate presented in~\cref{thm:convergence of sys} is independent of the number of fragments $K$.
\end{remark}

\begin{remark}
    Under additional assumptions of a doubly-stochastic mixing matrix, a tighter convergence rate can be derived by following the same line of analysis as given by Koloskova \etal in Theorem 2 of their convergence analysis in standard \ac{DL}~\cite{pmlr-v119-koloskova20a}.
\end{remark}

\subsection{Impact of the number of fragments}%
\label{subsec:theory impact of chunks}

The worst-case convergence derived by in~\cref{thm:convergence of sys} does not capture how fragmentation itself reduces redundant communication across correlated parameters or how it changes the contraction properties of the consensus step. To make this effect explicit, we analyze a simplified quadratic setting where the correlation between model parameters is captured by a positive-definite matrix $A$:
\begin{assumption}
    The local losses are quadratic and identical across nodes: $\forall i,\; f_i(x)=f(x)=\|x-x^*\|_A^2$ with $A\succ0$, $A= \transpose{A}$.
\end{assumption}

We concatenate local models into a vectorized form $X_t=\begin{pmatrix}x_t^{(1)} & \cdots & x_t^{(n)}\end{pmatrix}^\top$ and write $X^*=\mathbf{1}_n\otimes x^*$. The gradient step is
\begin{align}\label{eq:vectorized gradient update}
    X_{t+\half}=X_t-2\lr{}\,(\identity{n}\kronecker A)\,(X_t-X^*).
\end{align}

These notations allow us to capture the effect of fragmentation on gossip without tensor operations. Consider $K$ gossip matrices $W_t^{(k)}$, each associated to a chunk. We define:
\begin{align}\label{eq:def gossip matrix under chunking}
    \mathbf{W}_t
        :=\mathrm{diag}(W_t^{(\chunk{1})},\dots,W_t^{(\chunk{d})})=\sum_{k=1}^K \chunkproj{k}\kronecker W_t^{(k)}.
\end{align}
This matrix is block-diagonal, with each block associated with the gossip matrix for a parameter.
For $K=1$ (no fragmentation), $\mathbf{W}_t = \identity{d} \kronecker W_t^{(1)}$. The number of fragments $K$ controls the number of distinct blocks in $\mathbf{W}_t$.

To match the node-wise ordering of $X_t$ we use the \emph{commuting} (or vec-permutation) matrix $\commutingmatrix{n}{d}$~\cite{loanUbiquitousKroneckerProduct2000, hendersonVecpermutationMatrixVec1981, langvilleKroneckerProductStochastic2004}.
$\commutingmatrix{n}{d}$ permutes Kronecker-product orderings so that blocks correspond to nodes rather than parameter groups; this makes the block-diagonal structure in~\cref{eq:def gossip matrix under chunking} compatible with the stacked vector $X_t$.
The gossip step thus becomes:
\begin{align}\label{eq:vectorized gossip update}
    X_{t+1} = \commutingmatrix{n}{d} \mathbf{W}_t \commutingmatrix{d}{n}X_{t+\half}.
\end{align}

Combining~\cref{eq:vectorized gossip update,eq:vectorized gradient update}, we can view the evolution of $X_t$ as a linear system.
This allows us to analyze the impact of fragmentation on consensus:
\begin{restatable}[Consensus error evolution]{lemma}{vectorizedconvergence}\label{thm:consensus error evolution}
    Consider $e_t= X_t - \bar{X}_t$ to be the consensus error at iteration $t$. Then, the consensus error evolves according to:
    \begin{align*}
        e_{t+1} = P \commutingmatrix{n}{d} \mathbf{W}_t \commutingmatrix{d}{n} \left( \identity{n} \kronecker \left( \identity{d} - 2 \lr{} A \right) \right) e_t,
    \end{align*}
    where $P = \left(\identity{n} - \tfrac{1}{n}\one_n \one_n^\top\right) \kronecker I_d$ is the projector onto the disagreement subspace.
\end{restatable}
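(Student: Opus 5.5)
The plan is to write one round as an affine map on $X_t$ and then show that every component lying in the consensus subspace $\{\one_n\kronecker v : v\in\R^d\}$ is annihilated by $P$. Set $M_t := \commutingmatrix{n}{d}\mathbf{W}_t\commutingmatrix{d}{n}$ and $G := \identity{n}\kronecker(\identity{d}-2\lr{}A)$. From \cref{eq:vectorized gradient update}, since $X_t - 2\lr{}(\identity{n}\kronecker A)(X_t-X^*) = X^* + G(X_t-X^*)$, we get $X_{t+\half} = X^* + G(X_t - X^*)$. Combining with \cref{eq:vectorized gossip update} gives $X_{t+1} = M_t X^* + M_t G (X_t - X^*)$. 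We also record that $\bar X_t = (\tfrac1n \one_n\one_n^\top \kronecker \identity{d})X_t = \one_n\kronecker \bar x_t$, with $\bar x_t$ the average model, so that $e_t = P X_t$ and $e_{t+1} = P X_{t+1}$.

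Next we split $X_t - X^* = e_t + (\bar X_t - X^*)$. The second term equals $\one_n \kronecker (\bar x_t - x^*)$, so by the mixed-product rule $G(\bar X_t - X^*) = \one_n\kronecker (\identity{d}-2\lr{}A)(\bar x_t - x^*)$, which is again a consensus vector. Therefore
\[
X_{t+1} = M_t G e_t + M_t\bigl(\one_n\kronecker u_t\bigr), \qquad u_t := x^* + (\identity{d}-2\lr{}A)(\bar x_t-x^*).
\]

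The key step, and the only place where structure of the gossip is used, is to show that $M_t(\one_n\kronecker u) = \one_n\kronecker u$ for every $u\in\R^d$. This step is the main obstacle, mostly because the index conventions of the commutation matrix must be tracked carefully. We use $\commutingmatrix{d}{n}(\one_n\kronecker u) = u\kronecker \one_n$ from the defining property of the vec-permutation matrix. Then, by \cref{eq:def gossip matrix under chunking},
\[
\mathbf{W}_t(u\kronecker\one_n) = \sum_{k=1}^K \chunkproj{k}u \kronecker W_t^{(k)}\one_n = \sum_{k=1}^K \chunkproj{k}u\kronecker \one_n = u\kronecker\one_n,
\]
using row-stochasticity $W_t^{(k)}\one_n=\one_n$ for each fragment and $\sum_k\chunkproj{k}=\identity{d}$. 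Applying $\commutingmatrix{n}{d} = (\commutingmatrix{d}{n})^{-1}$ returns $\one_n\kronecker u$. Only row-stochasticity is needed here, not column-stochasticity, which is consistent with the \ac{EL}-type matrices.

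Finally we apply $P$. Since $(\identity{n}-\tfrac1n\one_n\one_n^\top)\one_n = 0$, we have $P(\one_n\kronecker u_t)=0$, and hence
\[
e_{t+1} = P X_{t+1} = P M_t G e_t,
\]
which is exactly the claimed recursion. A useful sanity check is that for $K=1$ the formula reduces to $P(W_t^{(1)}\kronecker(\identity{d}-2\lr{}A))e_t$, the known \ac{EL} consensus dynamics.
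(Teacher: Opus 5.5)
Your proposal is correct and follows essentially the paper's argument. You split $X_t - X^*$ into $e_t$ plus a consensus vector, use row-stochasticity of each $W_t^{(k)}$ to show the gossip operator preserves consensus vectors, and let $P$ annihilate them; your explicit check that $\commutingmatrix{d}{n}(\one_n\kronecker u)=u\kronecker\one_n$ and the fixed-point computation for $\mathbf{W}_t$ simply fill in a step the paper asserts without detail.
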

Using~\cref{thm:consensus error evolution}, we show that the consensus distance is driven by the matrix:
\begin{align*}
    M_t := P\commutingmatrix{n}{d} \mathbf{W}_t \commutingmatrix{d}{n} \left( \identity{n} \kronecker \left( \identity{d} - 2 \lr{} A \right) \right).
\end{align*}
In particular, the norm of the consensus distance is governed by $\rho(\transpose{M_t}M_t)$, the largest eigenvalue of $\transpose{M_t}M_t$,  as we consider the norm of the consensus error $\transpose{e_t}e_t$.
We now consider two examples, with two types of correlation between model parameters (represented by matrix $A$). Our numerical simulations using $2$-regular gossip matrices, in~\cref{fig:impact_of_K_on_consensus}, show that $\rho(\transpose{M_t}M_t)$ decreases when the number of fragments $K$ increases. Hence, the consensus distance also decreases faster, as confirmed by the results in~\cref{fig:impact_of_K_on_consensus_error}.

\begin{figure}[t]
    \centering
    \includegraphics{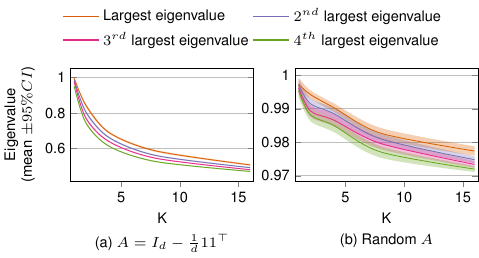}
    \caption{Eigenvalues of $\transpose{M}M$ as a function of $K$ for two examples with $n=50$ nodes and $d=16$ parameters. Increasing $K$ decreases the contraction factor, improving consensus.}%
    \label{fig:impact_of_K_on_consensus}
\end{figure}

\begin{figure}[t]
    \centering
    \includegraphics{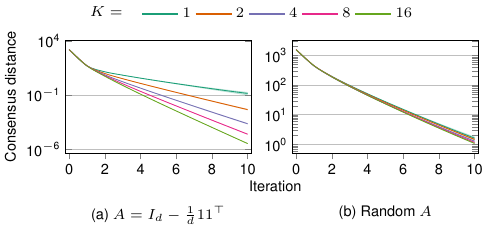}
    \caption{Consensus distance $\norm{X_t - \bar{X}_t}^2$ as a function of the number of fragments $K$ for two examples with $n=50$ nodes and $d=16$ parameters. Increasing $K$ improves consensus speed.}%
    \label{fig:impact_of_K_on_consensus_error}
\end{figure}

\section{Evaluation}
\label{sec:evaluation}

We now evaluate the performance of \sys.
Our evaluation answers the following questions.
\textbf{RQ1:}~How does \sys perform across datasets as we vary the number of fragments (\Cref{sec:eval_fragments})?
\textbf{RQ2:}~What is the effect of \sys on the individual node models and consensus distance (\Cref{sec:eval_fragments})?
\textbf{RQ3:}~What is the effect of increasing graph degree on \sys (\Cref{sec:eval_topology})?
\textbf{RQ4:}~What is the effect of data heterogeneity on \sys (\Cref{subsec:eval niidness})?

\subsection{Experimental Setup}

\textbf{Implementation and infrastructure.}
We implement \sys using the codebase of \textsc{Shatter}~\cite{biswas2024noiseless}\footnote{Code will be made publicly available upon acceptance.}.
The experimental environment consists of 3 machines, each with a dual Intel Xeon E5-2630 v3 processor running at 2.40GHz with 8 cores per processor.
The systems have hyperthreading enabled and operate Ubuntu 22.04.5 LTS using the 5.15.0-164-generic kernel version.

\textbf{Datasets, models, hyperparameters, and baseline.}
We use non-IID train and test datasets from \cifar, \cifarhundred~\cite{krizhevsky2014cifar} (in the Appendix), \movielens-small~\cite{grouplens:2021:movielens}, and a subsampled version of the \shakespeare dataset from the LEAF benchmark~\cite{leaf} (in the Appendix).
For \cifar and \cifarhundred, we use GN-LeNet model~\cite{hsiehskewscout2020}.
For \movielens, we employ collaborative filtering through matrix factorization~\cite{korenmatrixfactorization2009}.
Finally, for \shakespeare's next character prediction task, we use a stacked LSTM~\cite{leaf}.
The learning rate is determined through a grid search using a validation set.
We use \EL as the baseline \DL algorithm, which corresponds to $K=1$ (see \cref{rem:EL instance of MoL}).

\textbf{Metrics.}
We evaluate performance using four metrics.
First, \emph{node-average} performance measures the mean performance across all individual node models evaluated on a global test set.
This captures how well individual participants perform after training.
Second, \emph{average-model} performance measures the performance of a single model obtained by averaging all model parameters of all nodes.
This reflects the quality of a global model if we were to aggregate all models together~\cite{zhuSurprisingEffectivenessSingle2025}.
For classification tasks (\cifar, \cifarhundred, \shakespeare), we report accuracy; for the recommendation task (\movielens), we report RMSE loss.
Third, we report the \emph{consensus distance} computed as the mean of the $\ell^2$ norm between each node's parameters and the network-wide average model~\cite{kongConsensusControlDecentralized2021}.
Finally, the \emph{standard deviation} of node performance measures the heterogeneity in individual node performances, capturing fairness and consistency of learning outcomes across participants.

\subsection{The effect of the number of fragments}
\label{sec:eval_fragments}

\begin{figure}[t]
    \centering
    \includegraphics{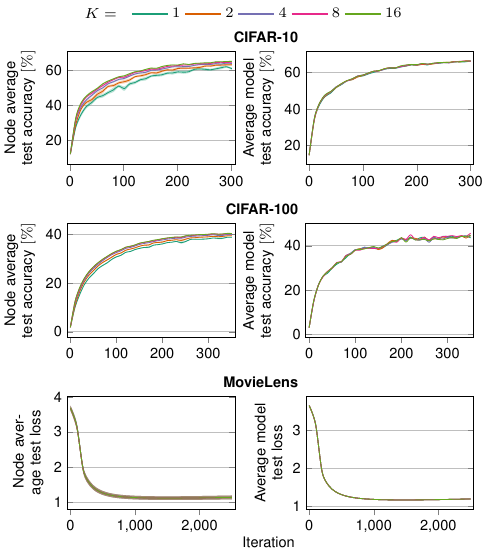}
    \caption{Performance of \sys across iterations and number of fragments ($K$) for \cifar (top), \cifarhundred (middle), and \movielens (bottom), showing node-average (left) and model-average (right) test accuracies. }
    \label{fig:plot1_performance}
\end{figure}

\Cref{fig:plot1_performance}  shows the impact of model fragmentation on the performance of \sys using a regular-graph communication network with a node degree of 8, using \cifar (top), \cifarhundred (middle), and \movielens (bottom), for different numbers of fragments $K$.
The left column show the average test accuracy and loss across nodes, whereas the right column shows the test accuracy of the averaged model across all nodes.
The top-left plots show that node accuracy increases with $K$ for \cifar and \cifarhundred, highlighting the positive impact of model fragmentation.
The top-right plots, on the other hand, shows that the average model accuracy remains the same regardless of $K$, confirming that model fragmentation does not negatively affect average model convergence.

Results on \movielens (bottom plots) are more nuanced, and both node-average and average-model loss values appear unaffected by the number of fragments $ K $. \Cref{fig:plot1a} in the Appendix shows similar results for \shakespeare.

\begin{figure}[t]
    \centering
    \includegraphics{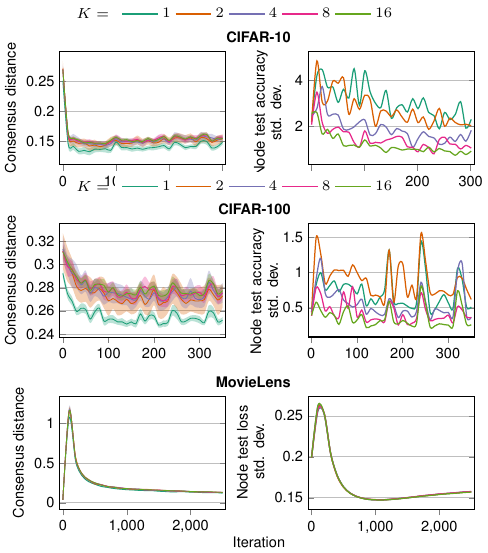}
    \caption{Consensus distance (left) and standard deviation of node performance (right) across iterations and $K$, for \cifar (top), \cifarhundred (middle), and \movielens (bottom), on a network with degree 8.}
    \label{fig:plot2_consensus}
\end{figure}

To gain better insight into the behavior of \sys, \Cref{fig:plot2_consensus} complements the aforementioned results by showing the consensus distance and the standard deviation of node test accuracy on the three datasets for various values of $K$.
We first discuss the results for \cifar and \cifarhundred (top two rows).
The overall consensus distance across iterations appears to increase slightly with $K$, which is different from the convex case analyzed in \Cref{subsec:theory impact of chunks}.
On the other hand, the standard deviation of node test accuracy clearly decreases with $K$.
This explains the better performance of large values of $K$ with respect to node-average test accuracy, and essentially shows that fragmentation provides some form of ``parameter mixing'', despite the increase in consensus distance. This suggests that consensus distance does not constitute a reliable metric to reflect performance in non-convex settings.
The bottom plots show, instead, that on \movielens, both consensus distance and the standard deviation of node accuracy remain unaffected by $K$, confirming the fact that fragmentation has no or little impact on this learning task. On \shakespeare (\Cref{fig:plot2a} in the Appendix), fragmentation increases consensus distance but standard deviation shows only minimal variations.%

\subsection{The effect of graph degree}
\label{sec:eval_topology}

\begin{figure}[t]
    \centering
    \includegraphics{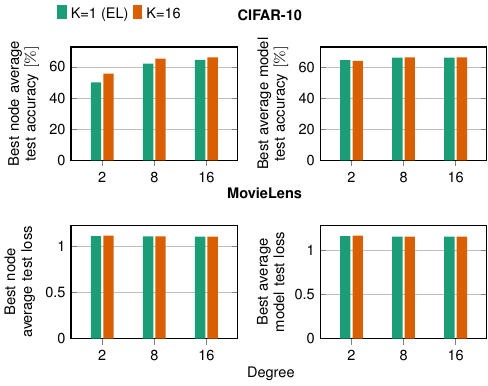}
    \caption{Best node-average (left) and average-model (right) performance across network degrees and values of $K$, for \cifar (top) and \movielens (bottom).}
    \label{fig:plot3_degree_best}
\end{figure}

Next, we examine the impact of the topology of the communication network by varying the degree of the regular communication graph.
\Cref{fig:plot3_degree_best} shows two configurations, $K=1$ and $K=16$, on three different graphs, with degrees of $2$, $8$, and $16$. The two top plots show node-average and average-model test accuracies in \cifar, while the bottom plots show node-average and average-model loss values in \movielens.
In \cifar, denser graphs result in faster mixing, leading to better convergence in terms of node-average test accuracy in both non-fragmented and fragmented configurations. The same consideration, albeit more nuanced, applies to average-model test accuracy, which, however, already reaches close-to-top values even with a sparse regular graph of degree $2$.
In \movielens, results follow the trend highlighted in the previous section: both performance metrics appear unaffected by node degree.

\begin{figure}[t]
    \centering
    \includegraphics{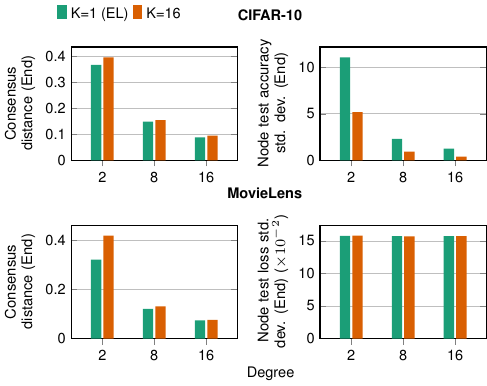}
    \caption{Consensus distance (left) and standard deviation of node performance (right) across network degrees and values of $K$, for \cifar (top) and \movielens (bottom).}
    \label{fig:plot4_degree_end}
\end{figure}

\Cref{fig:plot4_degree_end} complements these results by showing the final consensus distance and standard deviation of node accuracy/loss with regular-graph degrees of $2$, $8$, and $16$. In sparse graphs, fragmentation leads to a significant increase in consensus distance, even in the \movielens case, even though performance (\Cref{fig:plot3_degree_best}) does not decrease, and rather increases in \cifar. Moreover, a denser graph results, as expected, in lower consensus distances.

The standard deviation plots confirm the fact that this metric effectively explains the differences in node-average test accuracy. In \cifar, the standard deviation of node test accuracy decreases both with fragmentation (as already observed) and with the degree of the graph. Lower standard deviation, therefore, correlates with better node-average test accuracy.
In \movielens, on the other hand, the standard deviation appears independent of the communication graph's degree, which is consistent with the fact that accuracy is unaffected by graph density.
\subsection{The effect of data heterogeneity}\label{subsec:eval niidness}

\begin{figure}[h!]
    \centering
    \includegraphics{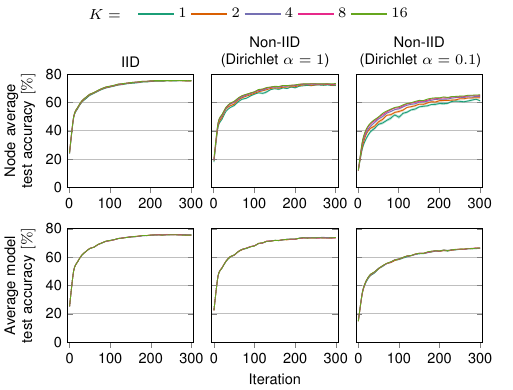}
    \caption{Node-average (top) and average-model (bottom) performance for \cifar across iterations, data distributions and values of $K$.}
    \label{fig:plot5_distribution_perf}
\end{figure}

\begin{figure}[h!]
    \centering
    \includegraphics{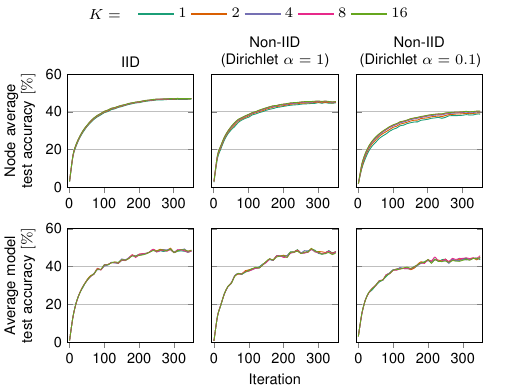}
    \caption{Node-average (top) and average-model (bottom) performance for \cifarhundred across iterations, data distributions and values of $K$.}
    \label{fig:plot5b_distribution_perf}
\end{figure}

\begin{figure}[h]
    \centering
    \includegraphics{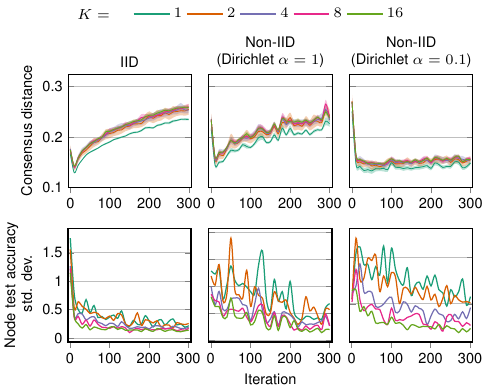}
    \caption{Consensus distance (top) and standard deviation of node-average performance (bottom) on \cifar across iterations, data distributions and values of $K$.}
    \label{fig:plot6_distribution_consensus}
\end{figure}

Finally, we explore the effect of the heterogeneity in data distribution on the impact of model fragmentation. \Cref{fig:plot5_distribution_perf,fig:plot5b_distribution_perf} both show three sets of plots, each showing node-average (top) and average-model (bottom) accuracy values. The first set shows an IID setting, the second a mildly non-IID setting ($\alpha=1$), and the third a strongly non-IID setting ($\alpha=0.1$). The Figure shows that the impact of fragmentation increases with the non-IID-ness of the data distribution. If data is IID, fragmentation brings no or little benefit, but as the non-IID-ness increases, the positive impact of fragmentation on node-average test accuracy increases. Average-model accuracy, however, remains independent of $K$ in all three settings. Finally, we observe that both metrics decrease their absolute values when moving to a more non-IID setting. Fragmentation thus provides the most benefit in this more difficult-to-train scenario.

\Cref{fig:plot6_distribution_consensus} provides further insights on the impact of non-IID-ness by showing consensus distance and node-accuracy standard deviation on \cifar. In all three settings, consensus distance increases with $K$ due to the non-convex setting, but it is interesting to see that it also increases with the number of iterations in all but the most non-IID setting. The standard deviation of node accuracy tends to decrease with $K$, confirming the trends observed above. However, differences across values of $K$ are more pronounced in more non-IID settings, which is precisely where fragmentation provides the greatest benefit. This is matched in~\cref{fig:plot8} in the Appendix on the \cifarhundred dataset.

\section{Related Work}
\label{sec:related}

\paragraph{\Acf{FL}} \Ac{FL} enables clients to collaboratively train a model by computing and sharing gradients locally~\cite{mcmahan2017communication, kairouz2021advances}. While \ac{FL} provides convergence rates on par with centralized settings~\cite{zinkevichParallelizedStochasticGradient2010,stichLocalSGDConverges2018}, it typically suffers from communication bottlenecks as every communication goes through the server. \ac{DL} overcomes this limitation by decentralizing training, but is in turn more affected by data heterogeneity~\cite{liFederatedOptimizationHeterogeneous2020,hsiehskewscout2020,belletDCliquesCompensatingData2022}.

\paragraph{Sparsification.} Sparsification techniques were originally designed as coordinate descent methods to reduce the cost of gradient descent in centralized training~\cite{nesterovEfficiencyCoordinateDescent2012, shamirStochasticGradientDescent2013}.
They were then adapted to reduce the communication overhead of \ac{FL} and \ac{DL}, by sharing only a subset of parameters instead of the entire model~\cite{alistarh2018sparseconvergence,tangCommunicationEfficientDecentralizedLearning2020}. By keeping track of the unshared parameters, it is possible to achieve state-of-the-art theoretical convergence rates while significantly reducing communication~\cite{stichSparsifiedSGDMemory2018}. Literature thus focuses on sharing the the components that yield the best tradeoff between convergence speed and communication~\cite{koloskovaDecentralizedStochasticOptimization2019,dhasade2023get,liu2023yoga}.

\paragraph{Fragmentation-based approaches.} Fragmentation-based approaches like Split Learning decentralize a model~\cite{thapa2022splitfed}, but do not consider scenarios where nodes hold an inference-capable model. \textsc{Shatter} and \textsc{DivShare}~\cite{biswas2024noiseless,biswas2025boosting} consider scenarios closer to ours, but either to address privacy concerns or to deal with stragglers in asynchronous~\ac{DL}, observing utility improvements from fragmentation. 
\section{Conclusion}
\label{sec:conclusion}

This work presents \sys, a novel \ac{DL} framework that studies the latent potential of model fragmentation. \sys provides a unifying paradigm-shifting approach to \ac{DL} that leverages model fragmentation to drive faster, more diverse information flow. By deriving  the theoretical convergence guarantees of \sys, we are the first to capture the evolution of the entire global model within a model-fragmented network architecture, showing that \sys maintains a competitive edge over the state-of-the-art \ac{EL} in terms of utility. We demonstrate that, for convex loss landscapes, \sys gains its edge by maximizing information diversity, effectively curbing redundant parameter sharing and ensuring that nodes receive uncorrelated updates and, therefore, optimizes the utility gain under the same communication budget. Extensive experiments demonstrate that \sys outperforms \ac{EL} in the presence of highly heterogenous node data, while remaining as good as \ac{EL} in more uniform settings.
In summary, \sys establishes a new, theoretically-grounded standard for efficient and scalable \ac{DL}.

\section*{Impact statement}
This paper presents work whose goal is to advance the field of machine learning. There are many potential societal consequences of our work, none of which we feel must be specifically highlighted here. 
\section*{Acknowledgments}
This work benefitted from the financial support of the international mobility grants program operated by the College Doctoral de Bretagne and co-funded by the Region de Bretagne and Rennes Métropole.
This work has been co-funded by the Swiss National Science Foundation, under the project ``FRIDAY: Frugal, Privacy-Aware and Practical Decentralized Learning'', SNSF proposal No. 10.001.796. 
\bibliography{main.bib}

@article{lian2017can,
  title={Can decentralized algorithms outperform centralized algorithms? a case study for decentralized parallel stochastic gradient descent},
  author={Lian, Xiangru and Zhang, Ce and Zhang, Huan and Hsieh, Cho-Jui and Zhang, Wei and Liu, Ji},
  journal={Advances in Neural Information Processing Systems},
  volume={30},
  year={2017},
  url={https://proceedings.neurips.cc/paper_files/paper/2017/file/f75526659f31040afeb61cb7133e4e6d-Paper.pdf}
}

@inproceedings{devos2023epidemic,
      title={Epidemic Learning: Boosting Decentralized Learning with Randomized Communication}, 
      author={Martijn de Vos and Sadegh Farhadkhani and Rachid Guerraoui and Anne-Marie Kermarrec and Rafael Pires and Rishi Sharma},
      year={2023},
      eprint={2310.01972},
      archivePrefix={arXiv},
      booktitle={37th Annual Conference on Neural Information Processing Systems (NeurIPS '23)}
}

@INPROCEEDINGS{dhasade2023get,
  author={Dhasade, Akash and Kermarrec, Anne-Marie and Pires, Rafael and Sharma, Rishi and Vujasinovic, Milos and Wigger, Jeffrey},
  booktitle={2023 IEEE 43rd International Conference on Distributed Computing Systems (ICDCS)}, 
  title={Get More for Less in Decentralized Learning Systems}, 
  year={2023},
  volume={},
  number={},
  pages={463-474},
  doi={10.1109/ICDCS57875.2023.00067}}

@InProceedings{pmlr-v119-koloskova20a,
  title = 	 {A Unified Theory of Decentralized {SGD} with Changing Topology and Local Updates},
  author =       {Koloskova, Anastasia and Loizou, Nicolas and Boreiri, Sadra and Jaggi, Martin and Stich, Sebastian},
  booktitle = 	 {Proceedings of the 37th International Conference on Machine Learning},
  pages = 	 {5381--5393},
  year = 	 {2020},
  editor = 	 {III, Hal Daumé and Singh, Aarti},
  volume = 	 {119},
  series = 	 {Proceedings of Machine Learning Research},
  month = 	 {13--18 Jul},
  publisher =    {PMLR},
  url = 	 {https://proceedings.mlr.press/v119/koloskova20a.html}
}

@inproceedings{mcmahan2017communication,
  title={Communication-efficient learning of deep networks from decentralized data},
  author={McMahan, Brendan and Moore, Eider and Ramage, Daniel and Hampson, Seth and y Arcas, Blaise Aguera},
  booktitle={Artificial intelligence and statistics},
  pages={1273--1282},
  year={2017},
  organization={PMLR}
}

@misc{leaf,
      title={LEAF: A Benchmark for Federated Settings}, 
      author={Sebastian Caldas and Sai Meher Karthik Duddu and Peter Wu and Tian Li and Jakub Konečný and H. Brendan McMahan and Virginia Smith and Ameet Talwalkar},
      year={2019},
      eprint={1812.01097},
      archivePrefix={arXiv},
      primaryClass={cs.LG}
}

@misc{grouplens:2021:movielens,
	year=2021,
	author={Grouplens},
	title={MovieLens Datasets},
	url={https://grouplens.org/datasets/movielens/},
}

@article{korenmatrixfactorization2009,
author = {Koren, Yehuda and Bell, Robert and Volinsky, Chris},
title = {Matrix Factorization Techniques for Recommender Systems},
year = {2009},
issue_date = {August 2009},
publisher = {IEEE Computer Society Press},
address = {Washington, DC, USA},
volume = {42},
number = {8},
issn = {0018-9162},
url = {https://doi.org/10.1109/MC.2009.263},
doi = {10.1109/MC.2009.263},
journal = {Computer},
month = {aug},
pages = {30--37},
numpages = {8}
}

@article{krizhevsky2014cifar,
  title={The CIFAR-10 dataset},
  author={Krizhevsky, Alex and Nair, Vinod and Hinton, Geoffrey},
  url={https://www.cs.toronto.edu/~kriz/cifar.html},
  volume={55},
  number={5},
  year={2014}
}

@inproceedings{alistarh2018sparseconvergence,
    author = {Alistarh, Dan and Hoefler, Torsten and Johansson, Mikael and Khirirat, Sarit and Konstantinov, Nikola and Renggli, C\'{e}dric},
    title = {The Convergence of Sparsified Gradient Methods},
    url={https://proceedings.neurips.cc/paper_files/paper/2018/file/314450613369e0ee72d0da7f6fee773c-Paper.pdf},
    year = {2018},
    location = {Montr\'{e}al, Canada},
    booktitle = {NeurIPS}
}

@inproceedings{koloskova2020decentralized,
  title={Decentralized Deep Learning with Arbitrary Communication Compression},
  author={Koloskova, Anastasia and Lin, Tao and Stich, Sebastian U and Jaggi, Martin},
  booktitle={ICLR},
  url={https://openreview.net/pdf?id=SkgGCkrKvH},
  year={2020}
}

@inproceedings{hsiehskewscout2020,
    author = {Hsieh, Kevin and Phanishayee, Amar and Mutlu, Onur and Gibbons, Phillip B.},
    title = {The Non-{IID} Data Quagmire of Decentralized Machine Learning},
    year = {2020},
    url={http://proceedings.mlr.press/v119/hsieh20a/hsieh20a.pdf},
    booktitle = {ICML}
}

@inproceedings{stichSparsifiedSGDMemory2018,
  title = {Sparsified {{SGD}} with {{Memory}}},
  booktitle = {Advances in {{Neural Information Processing Systems}}},
  author = {Stich, Sebastian U and Cordonnier, Jean-Baptiste and Jaggi, Martin},
  year = {2018},
  volume = {31},
  urldate = {2025-05-12},
  url = {https://proceedings.neurips.cc/paper/2018/hash/b440509a0106086a67bc2ea9df0a1dab-Abstract.html}
}

@article{nesterovEfficiencyCoordinateDescent2012,
  title = {Efficiency of {{Coordinate Descent Methods}} on {{Huge-Scale Optimization Problems}}},
  author = {Nesterov, {\relax Yu}.},
  year = {2012},
  month = jan,
  journal = {SIAM Journal on Optimization},
  volume = {22},
  number = {2},
  pages = {341--362},
  issn = {1052-6234, 1095-7189},
  doi = {10.1137/100802001},
  urldate = {2025-06-17},
  langid = {english},
}

@article{hendersonVecpermutationMatrixVec1981,
  title = {The Vec-Permutation Matrix, the Vec Operator and {{Kronecker}} Products: A Review},
  shorttitle = {The Vec-Permutation Matrix, the Vec Operator and {{Kronecker}} Products},
  author = {Henderson, Harold V. and Searle, S. R.},
  year = 1981,
  month = jan,
  journal = {Linear and Multilinear Algebra},
  volume = {9},
  number = {4},
  pages = {271--288},
  publisher = {Taylor \& Francis},
  issn = {0308-1087},
  doi = {10.1080/03081088108817379},
  urldate = {2025-09-15},
}

@article{loanUbiquitousKroneckerProduct2000,
  title = {The Ubiquitous {{Kronecker}} Product},
  author = {Loan, Charles F. Van},
  year = 2000,
  month = nov,
  journal = {Journal of Computational and Applied Mathematics},
  series = {Numerical {{Analysis}} 2000. {{Vol}}. {{III}}: {{Linear Algebra}}},
  volume = {123},
  number = {1},
  pages = {85--100},
  issn = {0377-0427},
  doi = {10.1016/S0377-0427(00)00393-9},
  urldate = {2025-03-19},
}

@article{langvilleKroneckerProductStochastic2004,
  title = {The {{Kronecker}} Product and Stochastic Automata Networks},
  author = {Langville, Amy N. and Stewart, William J.},
  year = 2004,
  month = jun,
  journal = {Journal of Computational and Applied Mathematics},
  volume = {167},
  number = {2},
  pages = {429--447},
  issn = {0377-0427},
  doi = {10.1016/j.cam.2003.10.010},
  urldate = {2025-03-19},
}

@misc{zhuSurprisingEffectivenessSingle2025,
  title = {On the {{Surprising Effectiveness}} of a {{Single Global Merging}} in {{Decentralized Learning}}},
  author = {Zhu, Tongtian and Zhang, Tianyu and Wang, Mingze and Zhou, Zhanpeng and Wang, Can},
  year = 2025,
  month = oct,
  number = {arXiv:2507.06542},
  eprint = {2507.06542},
  primaryclass = {cs},
  publisher = {arXiv},
  doi = {10.48550/arXiv.2507.06542},
  urldate = {2026-01-25},
  archiveprefix = {arXiv},
  langid = {english},
}

@inproceedings{kongConsensusControlDecentralized2021,
  title = {Consensus {{Control}} for {{Decentralized Deep Learning}}},
  booktitle = {Proceedings of the 38th {{International Conference}} on {{Machine Learning}}},
  author = {Kong, Lingjing and Lin, Tao and Koloskova, Anastasia and Jaggi, Martin and Stich, Sebastian},
  year = 2021,
  month = jul,
  pages = {5686--5696},
  publisher = {PMLR},
  issn = {2640-3498},
  urldate = {2026-01-26},
  langid = {english},
}

@article{biswas2024noiseless,
   title={Noiseless Privacy-Preserving Decentralized Learning},
   volume={2025},
   ISSN={2299-0984},
   url={http://dx.doi.org/10.56553/popets-2025-0043},
   DOI={10.56553/popets-2025-0043},
   number={1},
   journal={Proceedings on Privacy Enhancing Technologies},
   publisher={Privacy Enhancing Technologies Symposium Advisory Board},
   author={Biswas, Sayan and Even, Mathieu and Kermarrec, Anne-Marie and Massoulié, Laurent and Pires, Rafael and Sharma, Rishi and de Vos, Martijn},
   year={2025},
   month=jan, pages={824–844} }

@inproceedings{biswas2025boosting,
  title={Boosting asynchronous decentralized learning with model fragmentation},
  author={Biswas, Sayan and Kermarrec, Anne-Marie and Marouani, Alexis and Pires, Rafael and Sharma, Rishi and De Vos, Martijn},
  booktitle={Proceedings of the ACM on Web Conference 2025},
  pages={685--696},
  year={2025}
}

@inproceedings{shamirStochasticGradientDescent2013,
  title = {Stochastic {{Gradient Descent}} for {{Non-smooth Optimization}}: {{Convergence Results}} and {{Optimal Averaging Schemes}}},
  shorttitle = {Stochastic {{Gradient Descent}} for {{Non-smooth Optimization}}},
  booktitle = {Proceedings of the 30th {{International Conference}} on {{Machine Learning}}},
  author = {Shamir, Ohad and Zhang, Tong},
  year = 2013,
  month = feb,
  pages = {71--79},
  publisher = {PMLR},
  issn = {1938-7228},
  urldate = {2025-06-17},
  langid = {english},
}

@article{kairouz2021advances,
  title={Advances and open problems in federated learning},
  author={Kairouz, Peter and McMahan, H Brendan and Avent, Brendan and Bellet, Aur{\'e}lien and Bennis, Mehdi and Bhagoji, Arjun Nitin and Bonawitz, Kallista and Charles, Zachary and Cormode, Graham and Cummings, Rachel and others},
  journal={Foundations and trends{\textregistered} in machine learning},
  volume={14},
  number={1--2},
  pages={1--210},
  year={2021},
  publisher={Now Publishers, Inc.}
}

@inproceedings{belletDCliquesCompensatingData2022,
  title = {D-{{Cliques}}: {{Compensating}} for {{Data Heterogeneity}} with {{Topology}} in {{Decentralized Federated Learning}}},
  shorttitle = {D-{{Cliques}}},
  booktitle = {2022 41st {{International Symposium}} on {{Reliable Distributed Systems}} ({{SRDS}})},
  author = {Bellet, Aur{\'e}lien and Kermarrec, Anne-Marie and Lavoie, Erick},
  year = 2022,
  month = sep,
  pages = {1--11},
  issn = {2575-8462},
  doi = {10.1109/SRDS55811.2022.00011},
  urldate = {2026-01-28},
}

@inproceedings{koloskovaDecentralizedStochasticOptimization2019,
  title = {Decentralized {{Stochastic Optimization}} and {{Gossip Algorithms}} with {{Compressed Communication}}},
  booktitle = {Proceedings of the 36th {{International Conference}} on {{Machine Learning}}},
  author = {Koloskova, Anastasia and Stich, Sebastian and Jaggi, Martin},
  year = 2019,
  month = may,
  pages = {3478--3487},
  publisher = {PMLR},
  issn = {2640-3498},
  urldate = {2026-01-28},
}

@inproceedings{tangCommunicationEfficientDecentralizedLearning2020,
  title = {Communication-{{Efficient Decentralized Learning}} with {{Sparsification}} and {{Adaptive Peer Selection}}},
  booktitle = {2020 {{IEEE}} 40th {{International Conference}} on {{Distributed Computing Systems}} ({{ICDCS}})},
  author = {Tang, Zhenheng and Shi, Shaohuai and Chu, Xiaowen},
  year = 2020,
  month = nov,
  pages = {1207--1208},
  issn = {2575-8411},
  doi = {10.1109/ICDCS47774.2020.00153},
  urldate = {2026-01-28}
}

@article{liFederatedOptimizationHeterogeneous2020,
  title = {Federated {{Optimization}} in {{Heterogeneous Networks}}},
  author = {Li, Tian and Sahu, Anit Kumar and Zaheer, Manzil and Sanjabi, Maziar and Talwalkar, Ameet and Smith, Virginia},
  year = 2020,
  month = mar,
  journal = {Proceedings of Machine Learning and Systems},
  volume = {2},
  pages = {429--450},
  urldate = {2026-01-28},
  langid = {english},
}

@inproceedings{stichLocalSGDConverges2018,
  title = {Local {{SGD Converges Fast}} and {{Communicates Little}}},
  booktitle = {International {{Conference}} on {{Learning Representations}}},
  author = {Stich, Sebastian U.},
  year = 2018,
  month = sep,
  urldate = {2026-01-28},
  langid = {english},
}

@inproceedings{zinkevichParallelizedStochasticGradient2010,
  title = {Parallelized {{Stochastic Gradient Descent}}},
  booktitle = {Advances in {{Neural Information Processing Systems}}},
  author = {Zinkevich, Martin and Weimer, Markus and Li, Lihong and Smola, Alex},
  year = 2010,
  volume = {23},
  publisher = {Curran Associates, Inc.},
  urldate = {2026-01-28}
}

@article{liu2023yoga,
  title={Yoga: Adaptive layer-wise model aggregation for decentralized federated learning},
  author={Liu, Jun and Liu, Jianchun and Xu, Hongli and Liao, Yunming and Wang, Zhiyuan and Ma, Qianpiao},
  journal={IEEE/ACM Transactions on Networking},
  volume={32},
  number={2},
  pages={1768--1780},
  year={2023},
  publisher={IEEE}
}

@inproceedings{thapa2022splitfed,
  title={Splitfed: When federated learning meets split learning},
  author={Thapa, Chandra and Arachchige, Pathum Chamikara Mahawaga and Camtepe, Seyit and Sun, Lichao},
  booktitle={Proceedings of the AAAI conference on artificial intelligence},
  volume={36},
  number={8},
  pages={8485--8493},
  year={2022}
}
\bibliographystyle{icml2026}

\clearpage
\section{Additional Experiments}
\label{app:additional_experiments}

The experiments with \cifarhundred and \shakespeare datasets are presented in \Cref{fig:plot1a,fig:plot2a,fig:plot8}.

\begin{figure}[t]
    \centering
    \includegraphics{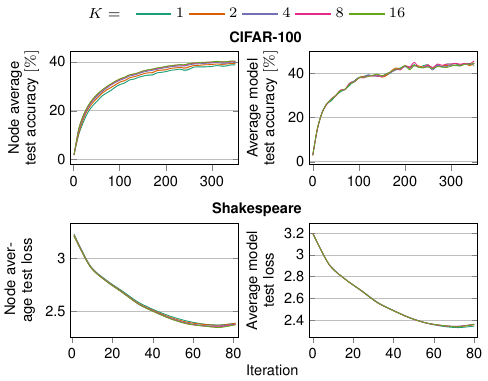}
    \caption{Performance of \sys across iterations and number of fragments ($K$) for \cifarhundred (top row) and \shakespeare (bottom row), showing node-average (left column) and model-average (right column) test accuracies.}
    \label{fig:plot1a}
\end{figure}

\begin{figure}[t]
    \centering
    \includegraphics{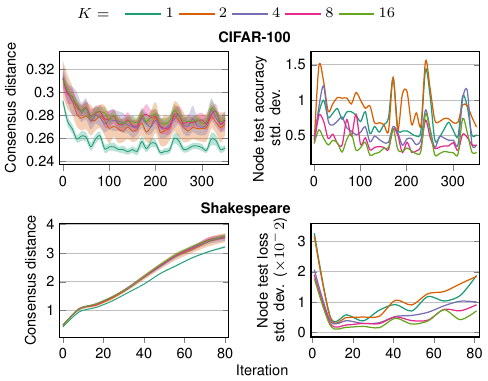}
    \caption{Consensus distance (left) and standard deviation of node performance (right) across iterations and $K$, for \cifarhundred (top) and \shakespeare (bottom), on a network with degree 8.}
    \label{fig:plot2a}
\end{figure}

\begin{figure}[h]
	\centering
    \includegraphics{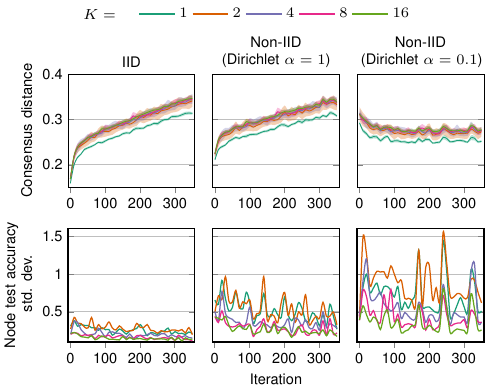}
	\caption{Consensus distance (top) and standard deviation of node-average performance (bottom) on \cifarhundred across iterations, data distributions and values of $K$.}
	\label{fig:plot8}
\end{figure}

\section{Assumptions}
\label{sec:assumptions}
We now proceed to detail the necessary assumptions to prove~\cref{thm:convergence of sys}. All the assumptions here are standard in the \ac{DL} literature~\cite{lian2017can,devos2023epidemic,pmlr-v119-koloskova20a}, and are not specific to our approach.

\begin{assumption}[$\smoothnessconstant$-smoothness]\label{ass:smoothness}
    We assume that the local loss functions are smooth for some constant $\smoothnessconstant$:
    for any $x,y\in\R^\nbparams$, it holds that
    \begin{align}
        \norm{\localgradient{i}\left(x\right) - \localgradient{i}\left(y\right)}
        \leq
        \smoothnessconstant \norm{x - y}
        \label{eq:smoothness}
    \end{align}
\end{assumption}

\begin{assumption}[Bounded stochastic noise]\label{ass:bounded stochastic noise}
    For a fixed $x\in\modelspace$:
    \begin{align}\label{eq:bounded stochastic noise}
        \esp{\norm{\nabla f_i\left(x,\datapoint{i}{t}\right) - \localgradient{i}\left(x\right)}^2} \leq \boundstochasticnoise^2
    \end{align}
\end{assumption}

We also assume a bound $\boundheterogeneity$ on the local heterogeneity between nodes.
\begin{assumption}[Bounded heterogeneity]\label{ass:bounded heterogeneity} There exist some $\boundheterogeneity\in\R{}$ such that for any $x \in \modelspace$:
    \begin{align}\label{eq:bounded heterogeneity}
        \frac{1}{n}\sum_{i=1}^{n} \norm{\localgradient{i}\left(x\right) - \globalgradient\left(x\right)}^2 \leq \boundheterogeneity^2
    \end{align}
    where $\globalgradient\left(x\right) = \frac{1}{n}\sum_{i=1}^{n} \localgradient{i}\left(x\right)$.
\end{assumption}

\section{Useful Lemmas}\label{sec:useful lemmas}
In the proof, we will use the following common properties.
\begin{lemma}
    For any $a,b\in\R^\nbparams$ and any $\alpha>0$:
    \begin{align}\label{eq:splitting norm of sum arbitrary constant}
        \norm{a+b}^2\leq \left(1+\alpha\right)\norm{a}^2 
        + \left(1+\alpha^{-1}\right)\norm{b}^2.
    \end{align}
    
\end{lemma}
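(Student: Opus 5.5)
The inequality follows from expanding the square and bounding the cross term with Young's inequality. I will write
\[
\norm{a+b}^2 = \norm{a}^2 + 2\langle a,b\rangle + \norm{b}^2 ,
\]
so the whole task is to control $2\langle a,b\rangle$ by $\alpha\norm{a}^2 + \alpha^{-1}\norm{b}^2$.

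For the cross term, I will use the nonnegativity of a suitably scaled square. Since $\alpha>0$, the vectors $\sqrt{\alpha}\,a$ and $b/\sqrt{\alpha}$ are well defined, and
\[
0 \leq \norm{\sqrt{\alpha}\,a - \tfrac{1}{\sqrt{\alpha}}\,b}^2 = \alpha\norm{a}^2 - 2\langle a,b\rangle + \alpha^{-1}\norm{b}^2 .
\]
Rearranging gives $2\langle a,b\rangle \leq \alpha\norm{a}^2 + \alpha^{-1}\norm{b}^2$.

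Substituting this bound into the expansion yields
\[
\norm{a+b}^2 \leq (1+\alpha)\norm{a}^2 + (1+\alpha^{-1})\norm{b}^2 ,
\]
which is the claim. The only point needing care is that $\alpha>0$, which is what makes the scaling by $\sqrt{\alpha}$ legitimate; there is no real obstacle.

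Equivalently, the cross-term bound is Cauchy--Schwarz, $2\langle a,b\rangle \leq 2\norm{a}\norm{b}$, followed by AM--GM, $2\norm{a}\norm{b} \leq \alpha\norm{a}^2 + \alpha^{-1}\norm{b}^2$.
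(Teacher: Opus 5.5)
Your proof is correct: expanding the square and bounding the cross term by Young's inequality, via the nonnegative scaled square, is exactly the standard argument. The paper lists this lemma among ``common properties'' and gives no proof of its own, so your argument supplies the justification it implicitly relies on.
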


\begin{lemma}\label{lem:dot product bound}
    Consider two real series $\left(a_i\right)_{i\in\integerinterval{1,n}},\left(b_i\right)_{i\in\integerinterval{1,n}}$ such that $a_i\geq 0, b_i\geq 0$.
    Then:
    \begin{align}\label{eq:dot product bound}
        \sum_{i=1}^{n}a_i b_i \leq \left(\sum_{i=1}^{n}a_i\right)\left(\sum_{i=1}^{n}b_i\right)
    \end{align}
\end{lemma}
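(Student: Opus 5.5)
The plan is to expand the right-hand side of~\cref{eq:dot product bound} by distributivity and compare it term by term with the left-hand side. Concretely, I would write
\begin{align*}
    \left(\sum_{i=1}^{n}a_i\right)\left(\sum_{j=1}^{n}b_j\right)
    = \sum_{i=1}^{n}\sum_{j=1}^{n} a_i b_j
    = \sum_{i=1}^{n} a_i b_i + \sum_{i\neq j} a_i b_j .
\end{align*}
This splits the double sum into its diagonal part, which is exactly the left-hand side of~\cref{eq:dot product bound}, and its off-diagonal part.

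The second step is to use the sign hypothesis $a_i\geq 0$, $b_i\geq 0$. For every pair $i\neq j$ we have $a_i b_j\geq 0$, so the off-diagonal sum $\sum_{i\neq j} a_i b_j$ is nonnegative. Dropping it can only decrease the expression, which yields $\sum_{i} a_i b_i \leq \left(\sum_i a_i\right)\left(\sum_j b_j\right)$, as claimed.

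There is no real obstacle here. The only point requiring care is that nonnegativity is genuinely needed for the off-diagonal terms: with mixed signs the cross terms could be negative and the inequality fails. I would therefore state explicitly where the hypothesis enters. I would also note that equality holds when at most one index carries nonzero mass in each sequence at aligned positions, although this is not needed for the later use of the lemma in bounding weighted sums of gossip coefficients.
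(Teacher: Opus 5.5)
Your argument is correct and complete: expanding the product as a double sum and discarding the off-diagonal terms $a_ib_j\geq 0$, $i\neq j$, is the standard justification. The paper lists this lemma among ``common properties'' and gives no proof of its own.

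Your equality remark is only a sufficient condition. Equality holds if and only if $a_ib_j=0$ for all $i\neq j$, which also covers the case where one sequence vanishes identically and the other is arbitrary. This point is not needed for the lemma.
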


\begin{lemma}\label{lem:variance norm rewording}
    \begin{align}\label{eq:variance norm rewording}
        \frac{1}{n}\sum_{i=1}^{n}\norm{x_i-\bar{x}}^2 = \frac{1}{2n^2}\sum_{i=1}^{n}\sum_{j=1}^{n}\norm{x_i-x_j}^2
    \end{align}
\end{lemma}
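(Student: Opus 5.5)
The statement to prove is the variance identity \eqref{eq:variance norm rewording}. I will expand the right-hand side and collapse it onto the left-hand side. The only tools needed are bilinearity of the inner product and the definition $\bar{x} = \frac{1}{n}\sum_{i=1}^n x_i$, from which $\sum_{i=1}^n (x_i - \bar{x}) = 0$.

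First, write $x_i - x_j = (x_i - \bar{x}) - (x_j - \bar{x})$. Set $u_i := x_i - \bar{x}$, so that $\sum_i u_i = 0$. Expanding the square gives
\begin{align*}
\sum_{i=1}^{n}\sum_{j=1}^{n}\norm{x_i-x_j}^2 = \sum_{i,j}\left(\norm{u_i}^2 + \norm{u_j}^2 - 2\langle u_i, u_j\rangle\right).
\end{align*}

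The two squared-norm terms each contribute $n\sum_i \norm{u_i}^2$, since the free index is summed $n$ times. The cross term equals $-2\left\langle \sum_i u_i, \sum_j u_j\right\rangle = 0$, because $\sum_i u_i = 0$. The double sum is therefore $2n\sum_i\norm{x_i-\bar{x}}^2$. Dividing by $2n^2$ gives $\frac{1}{n}\sum_i \norm{x_i-\bar{x}}^2$, which is the left-hand side.

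There is no real obstacle here. The only point needing care is the cancellation of the cross term, which depends entirely on $\bar{x}$ being the uniform average. If a weighted mean were used instead, the identity would have to be adapted with matching weights.
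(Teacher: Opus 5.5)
Your proof is correct and complete: centering at $\bar{x}$, expanding the square, and eliminating the cross term via $\sum_i (x_i-\bar{x})=0$ gives exactly $2n\sum_i\|x_i-\bar{x}\|^2$ for the double sum. The paper does not prove this identity; it lists it among the ``common properties'' in its useful-lemmas section, and your argument is the standard one it implicitly relies on.
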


\begin{lemma}\label{lem:norm of sum} For any set of vector $\left(\model{i}{}\right)$, it holds that:
    \begin{align}
        \norm{\sum_{i=1}^{n}\model{i}{}}^2 \leq& n \sum_{i=1}^{n}\norm{\model{i}{}}^2 
    \end{align}
    
\end{lemma}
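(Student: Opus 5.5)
The plan is to derive the inequality from convexity of the squared norm, with an alternative route through the Cauchy--Schwarz inequality that avoids convexity entirely. Both routes are elementary, so no earlier result of the paper is needed beyond the definition of the Euclidean norm.

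\textbf{Main route (Jensen).} Rewrite the sum as an average:
\begin{align*}
    \norm{\sum_{i=1}^{n}\model{i}{}}^2 = n^2 \norm{\frac{1}{n}\sum_{i=1}^{n}\model{i}{}}^2 .
\end{align*}
The map $y\mapsto\norm{y}^2$ is convex on $\modelspace$: it is a positive semidefinite quadratic form, or equivalently its Hessian is $2\identity{\nbparams}\succeq 0$. Jensen's inequality with uniform weights $\nicefrac{1}{n}$ then gives
\begin{align*}
    \norm{\frac{1}{n}\sum_{i=1}^{n}\model{i}{}}^2 \leq \frac{1}{n}\sum_{i=1}^{n}\norm{\model{i}{}}^2 .
\end{align*}
Multiplying both sides by $n^2$ yields the claim.

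\textbf{Alternative route (Cauchy--Schwarz).} First apply the triangle inequality:
\begin{align*}
    \norm{\sum_{i=1}^{n}\model{i}{}} \leq \sum_{i=1}^{n}\norm{\model{i}{}} .
\end{align*}
Next, apply the scalar Cauchy--Schwarz inequality to the vectors $(1,\dots,1)$ and $(\norm{\model{1}{}},\dots,\norm{\model{n}{}})$ in $\R^n$:
\begin{align*}
    \left(\sum_{i=1}^{n}\norm{\model{i}{}}\right)^2 \leq n\sum_{i=1}^{n}\norm{\model{i}{}}^2 .
\end{align*}
Squaring the first inequality, which is valid because both sides are nonnegative, and chaining it with the second gives the result.

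There is no real obstacle here. The only point to check is the justification of convexity, or equivalently the nonnegativity needed before squaring in the Cauchy--Schwarz route. An even more direct check is to expand $\sum_{i,j}\norm{\model{i}{}-\model{j}{}}^2\geq 0$. In the same spirit as \cref{lem:variance norm rewording}, this expansion gives $2n\sum_i\norm{\model{i}{}}^2 - 2\norm{\sum_i\model{i}{}}^2\geq 0$, which is exactly the statement.
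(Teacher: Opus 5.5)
Your proof is correct, and all three routes you give are valid: Jensen with uniform weights, the triangle inequality followed by Cauchy--Schwarz in $\mathbb{R}^n$, and expanding $\sum_{i,j}\|x_i-x_j\|^2\ge 0$. The paper lists this lemma among ``common properties'' and gives no proof of its own, so there is nothing to compare against. One small remark: the Jensen and triangle-inequality arguments extend to any norm, since convexity of $\|\cdot\|^2$ follows from convexity of $\|\cdot\|$. The expansion argument, by contrast, relies on the inner-product structure, which is harmless here because the paper uses the Euclidean norm.
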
 
\section{Missing proofs}\label{sec:proofs}
We now restate and provide the missing proofs for theorems from \Cref{sec:analysis}.

\subsection{Proof of \cref{thm:convergence of sys}}

To prove~\cref{thm:convergence of sys}, we will consider~\cref{ass:bounded heterogeneity,ass:bounded stochastic noise,ass:smoothness} to match assumptions from~\cite{devos2023epidemic}.

\subsubsection{Proof of intermediary lemmas}
The proof of this theorem will strongly match the original work's. We first restate all lemmas whose proof is unchanged:

\begin{lemma}\label{lem:consensus_and_grad_disagreement}
Suppose that~\cref{ass:bounded heterogeneity,ass:bounded stochastic noise,ass:smoothness} hold. Consider~\cref{alg:system} and let $\gamma$ be a step-size with $\gamma \le \tfrac{1}{20L}$. Then for any $t\ge 0$ we have
\[
\frac{1}{n^2}\sum_{i,j\in[n]}\E\big\|x_t^{(i)}-x_t^{(j)}\big\|^2 \le 20\frac{1+3\eta_s}{(1-\eta_s)^2}\,\beta_s\,\gamma^2\big(\sigma^2+\mathcal{H}^2\big),
\]
and
\[
\frac{1}{n^2}\sum_{i,j\in[n]}\E\big\|g_t^{(i)}-g_t^{(j)}\big\|^2 \le 15\big(\sigma^2+\mathcal{H}^2\big),
\]
with $\beta_s:= \frac{1}{s}\left(1-\left(1-\frac{s}{n-1}\right)^n\right)-\frac{1}{n-1}$ 
\end{lemma}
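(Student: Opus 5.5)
The plan is to reduce the statement to the corresponding lemma of \ac{EL}~\cite{devos2023epidemic}. The key observation is that fragmentation changes only the gossip step, and that the consensus quantity splits additively over fragments. Because the projectors $\chunkproj{k}$ are orthogonal, disjoint and sum to $I_d$, for any vectors we have $\norm{x_t^{(i)}-x_t^{(j)}}^2=\sum_{k=1}^K\norm{\chunkproj{k}(x_t^{(i)}-x_t^{(j)})}^2$. By the fragment-wise update \eqref{eq:update rule chunks for algo}, the $k$-th fragment of the post-gossip models is obtained from the $k$-th fragment of the half-step models by applying exactly one \ac{EL} gossip matrix $W_t^{(k)}$. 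Each $W_t^{(k)}$ has the same distribution as the matrix used in \ac{EL}, namely $s$-out random sampling that is row stochastic.

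\textbf{Step 1 (per-fragment mixing).} I will invoke the mixing lemma of \ac{EL}. For fixed vectors $z_1,\dots,z_n$ and $y=W z$ with $W$ drawn as in \ac{EL}, it states that $\E\big[\tfrac{1}{n^2}\sum_{i,j}\norm{y_i-y_j}^2\big]\le \beta_s\,\tfrac{1}{n^2}\sum_{i,j}\norm{z_i-z_j}^2$. I apply it conditionally on the half-step models to each fragment $\chunkproj{k}x_{t+\half}^{(\cdot)}$ separately and then sum over $k$. The result is
\[
\tfrac{1}{n^2}\textstyle\sum_{i,j}\E\norm{x_{t+1}^{(i)}-x_{t+1}^{(j)}}^2\le\beta_s\,\tfrac{1}{n^2}\sum_{i,j}\E\norm{x_{t+\half}^{(i)}-x_{t+\half}^{(j)}}^2 .
\]
This step requires only linearity of expectation. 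No independence between the different $W_t^{(k)}$ is needed, and no independence between fragments and the stochastic gradients either, because the gradients are fixed before gossip and the bound is linear in the per-coordinate squared differences. This is why the constant is independent of $K$. I expect this step to be the main conceptual point: one has to make sure the conditioning is legitimate fragment by fragment, i.e.\ that $W_t^{(k)}$ is independent of the past and of $x_{t+\half}$.

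\textbf{Step 2 (recursion, identical to \ac{EL}).} Write $x_{t+\half}^{(i)}-x_{t+\half}^{(j)}=(x_t^{(i)}-x_t^{(j)})-\gamma(g_t^{(i)}-g_t^{(j)})$ and apply \eqref{eq:splitting norm of sum arbitrary constant} with a parameter $\alpha$ chosen in terms of $\eta_s$, as in \ac{EL}. This gives $D_{t+1}\le\beta_s\big[(1+\alpha)D_t+(1+\alpha^{-1})\gamma^2G_t\big]$, where $D_t$ and $G_t$ denote the averaged model and gradient disagreements. To bound $G_t$, I split $g_t^{(i)}-g_t^{(j)}$ into three parts:
\begin{itemize}
\item the stochastic noise, bounded by $\sigma^2$ via \cref{ass:bounded stochastic noise};
\item the heterogeneity $\nabla F_i(x)-\nabla F_j(x)$ at a common point, bounded by $\mathcal{H}^2$ via \cref{ass:bounded heterogeneity} together with \cref{lem:variance norm rewording};
\item the smoothness term, bounded by $L^2\norm{x_t^{(i)}-x_t^{(j)}}^2$ via \cref{ass:smoothness}.
\end{itemize}
Combining these with \cref{lem:norm of sum} yields $G_t\le c(\sigma^2+\mathcal{H}^2)+c'L^2D_t$.

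\textbf{Step 3 (closing).} Substituting the bound on $G_t$ into the recursion, the condition $\gamma\le\frac{1}{20L}$ makes the $L^2\gamma^2$ feedback absorbable. The recursion then contracts with a factor determined by $\eta_s<1$, and unrolling the geometric series from a common initialization gives the first bound, with constant $20\frac{1+3\eta_s}{(1-\eta_s)^2}\beta_s$. Plugging this consensus bound back into the bound on $G_t$, and using $L^2\gamma^2\le 1/400$, gives the gradient disagreement bound $15(\sigma^2+\mathcal{H}^2)$. Steps 2 and 3 are verbatim the \ac{EL} arguments, since they never reference the gossip structure beyond the inequality of Step 1.
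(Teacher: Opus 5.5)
Your proposal is correct and follows the paper's approach. The paper declares this lemma unchanged from EL, and the only gossip-dependent ingredient it re-proves is the mixing inequality, argued exactly as in your Step 1: split the pairwise disagreement over the orthogonal projectors $\Pi^{(k)}$, apply the EL-Local mixing bound to each fragment with $W_t^{(k)}$, and sum over $k$. Your Steps 2--3 spell out the EL recursion that the paper cites, and you rightly make explicit the common initialization and the single-gradient half-step that this recursion implicitly relies on.
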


\begin{lemma}\label{lem:gradient_norm_bound}
Suppose that~\cref{ass:bounded heterogeneity,ass:bounded stochastic noise} hold. Consider~\cref{alg:system} with $\lr{} \le \tfrac{1}{2L}$. For any $t\in\{0,\dots,T-1\}$ it holds that
\begin{align*}
\E\big\|\nabla F(\bar x_t)\big\|^2 
&\le \frac{2}{\gamma}\,\E\big[F(\bar x_t)-F(\bar x_{t+1})\big] 
\\&+ \frac{L^2}{2n^2}\sum_{i,j\in[n]}\E\big\|x_t^{(i)}-x_t^{(j)}\big\|^2 
+ \frac{2L\gamma\sigma^2}{n} 
\\&+ \frac{2L}{\gamma}\,\E\big\|\bar x_{t+1}-\bar x_{t+1/2}\big\|^2.
\end{align*}
\end{lemma}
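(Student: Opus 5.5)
The inequality is the standard one-step descent bound for the averaged iterate. I will derive it from $L$-smoothness of $F$ (\cref{ass:smoothness}, inherited by $F$ as the average of the $F_i$) applied along the path $\bar x_t \to \bar x_{t+1/2} \to \bar x_{t+1}$. I will work with a single local step ($H=1$), as in \ac{EL}, so that $\bar x_{t+1/2}=\bar x_t-\gamma\bar g_t$ with $\bar g_t=\frac1n\sum_i g_t^{(i)}$. Note that $\bar x_{t+1}\neq\bar x_{t+1/2}$ in general, because each $W_t^{(k)}$ is only row stochastic. The fragmentation only enters through this last term, which is why it is kept explicitly; its structure per fragment plays no role here.

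\textbf{Step 1: split the descent.} I write $F(\bar x_{t+1})\le F(\bar x_{t+1/2})+\langle\nabla F(\bar x_{t+1/2}),\bar x_{t+1}-\bar x_{t+1/2}\rangle+\frac L2\|\bar x_{t+1}-\bar x_{t+1/2}\|^2$. The cross term needs care, since $\bar x_{t+1}-\bar x_{t+1/2}$ is not conditionally mean zero in general. I will handle it with Young's inequality,
\[
\langle a,b\rangle\le \tfrac{\gamma}{4}\|a\|^2+\tfrac1\gamma\|b\|^2 .
\]
This produces $\frac{\gamma}{4}\|\nabla F(\bar x_{t+1/2})\|^2$ together with an $O(\frac{1}{\gamma})\|\bar x_{t+1}-\bar x_{t+1/2}\|^2$ term. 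Because $\gamma\le 1/(2L)$ gives $L/2\le 1/\gamma$, both contributions fit into the stated coefficient $\frac{2L}{\gamma}$ after the final multiplication by $2/\gamma$; if needed, I will rescale using $L\gamma\le\frac12$. An alternative for the cross term is to note that, by the symmetry of the random sampling in \ac{EL}, $\E[\bar x_{t+1}\mid\mathcal F_{t+1/2}]=\bar x_{t+1/2}$ for each fragment. In that case the cross term vanishes in expectation, which gives the cleaner constant $\frac{L}{2}\cdot\frac{2}{\gamma}=\frac{L}{\gamma}\le\frac{2L}{\gamma}$. I will try this first, since the per-fragment operators have the same distribution as in \ac{EL}.

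\textbf{Step 2: the SGD step.} By smoothness,
\[
\E F(\bar x_{t+1/2})\le F(\bar x_t)-\gamma\langle\nabla F(\bar x_t),\E\bar g_t\rangle+\frac{L\gamma^2}{2}\E\|\bar g_t\|^2 .
\]
Here $\E\bar g_t=\frac1n\sum_i\nabla F_i(x_t^{(i)})$, and the variance term gives $\E\|\bar g_t\|^2\le\|\E\bar g_t\|^2+\sigma^2/n$ by independence across nodes (\cref{ass:bounded stochastic noise}). For the inner product I use
\[
-\langle a,b\rangle=\tfrac12\big(\|a-b\|^2-\|a\|^2-\|b\|^2\big),
\]
and with $\gamma\le 1/(2L)$ the $-\|b\|^2$ term absorbs $\frac{L\gamma^2}{2}\|b\|^2$. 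The remaining term $\|a-b\|^2$ is bounded via smoothness and \cref{lem:variance norm rewording}:
\[
\Big\|\nabla F(\bar x_t)-\tfrac1n\textstyle\sum_i\nabla F_i(x_t^{(i)})\Big\|^2\le\frac{L^2}{n}\sum_i\|x_t^{(i)}-\bar x_t\|^2=\frac{L^2}{2n^2}\sum_{i,j}\|x_t^{(i)}-x_t^{(j)}\|^2 .
\]

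\textbf{Step 3: combine and rearrange.} Adding the two steps, multiplying by $2/\gamma$ and rearranging yields the claimed bound on $\E\|\nabla F(\bar x_t)\|^2$. The coefficient of the consensus term works out to $L^2/2n^2$ after the factor $\frac{\gamma}{2}\cdot\frac{2}{\gamma}$, and the noise term becomes $\frac{2L\gamma\sigma^2}{n}$ after absorbing constants.

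The main obstacle is the cross term in Step 1, in particular when the first route fails and $\|\nabla F(\bar x_{t+1/2})\|^2$ has to be related back to $\|\nabla F(\bar x_t)\|^2$ so that it can be absorbed on the left-hand side. I would do this with $\|\nabla F(\bar x_{t+1/2})\|^2\le 2\|\nabla F(\bar x_t)\|^2+2L^2\gamma^2\|\bar g_t\|^2$ and absorb the result using the small-stepsize assumption.
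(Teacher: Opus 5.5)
Your preferred route for Step~1, the one based on conditional unbiasedness of the gossip, is correct. It is essentially the \ac{EL} argument the paper defers to; the paper restates this lemma without proof. The key fact is that each $W_t^{(k)}$ is drawn independently of $\mathcal{F}_{t+1/2}$ with expected column sums equal to one. Hence $\mathbb{E}[\bar x_{t+1}-\bar x_{t+1/2}\mid\mathcal{F}_{t+1/2}]=0$ fragment by fragment, and the cross term against any $\mathcal{F}_{t+1/2}$-measurable gradient vanishes. You are right to use this conditional form: the unconditional identity in part (a) of the mixing lemma would not suffice on its own.

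The only difference from the paper is bookkeeping. The stated constants are exactly what one gets by applying smoothness once between $\bar x_t$ and $\bar x_{t+1}$ and splitting $\|\bar x_{t+1}-\bar x_t\|^2\le 2\|\bar x_{t+1}-\bar x_{t+1/2}\|^2+2\gamma^2\|\bar g_t\|^2$. That split produces $\frac{2L}{\gamma}$, $\frac{2L\gamma\sigma^2}{n}$, and the requirement $L\gamma^2\le\frac{\gamma}{2}$, i.e.\ $\gamma\le\frac{1}{2L}$. Your two-step split through $\bar x_{t+1/2}$ avoids that factor $2$. It gives $\frac{L}{\gamma}$ and $\frac{L\gamma\sigma^2}{n}$ under the weaker condition $\gamma\le\frac1L$, which implies the stated bound. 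Restricting to $H=1$ matches the single-local-step setting from which the lemma is imported.

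Your Young's-inequality fallback, however, does not work and should be dropped. After the final multiplication by $\frac{2}{\gamma}$, the term $\frac{1}{\gamma}\|\bar x_{t+1}-\bar x_{t+1/2}\|^2$ becomes $\frac{2}{\gamma^2}\|\bar x_{t+1}-\bar x_{t+1/2}\|^2$. Since $\gamma\le\frac{1}{2L}$ gives $\frac{2}{\gamma^2}\ge\frac{4L}{\gamma}$, the rescaling you propose goes in the wrong direction.

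It is worse than a bad constant. Your bound $\|\nabla F(\bar x_{t+1/2})\|^2\le 2\|\nabla F(\bar x_t)\|^2+2L^2\gamma^2\|\bar g_t\|^2$ turns $\frac{\gamma}{4}\|\nabla F(\bar x_{t+1/2})\|^2$ into $\frac{\gamma}{2}\|\nabla F(\bar x_t)\|^2+\frac{L^2\gamma^3}{2}\|\bar g_t\|^2$. This exactly cancels the $-\frac{\gamma}{2}\|\nabla F(\bar x_t)\|^2$ from Step~2 and leaves nothing on the left-hand side.

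Tuning the Young parameter does not fix this. A first-order cross term $\langle\nabla F,\bar x_{t+1}-\bar x_{t+1/2}\rangle$ can only be absorbed against the $O(\gamma)\|\nabla F(\bar x_t)\|^2$ gain of the SGD step. So any Cauchy--Schwarz/Young treatment leaves a coefficient of order $\gamma^{-2}$ on $\mathbb{E}\|\bar x_{t+1}-\bar x_{t+1/2}\|^2$. Combined with part (c) of the mixing lemma and the $O(\gamma^2)$ consensus bound, that would leave a term that does not vanish as $\gamma\to0$. The mean-preserving property of the random gossip is therefore not an optional simplification: it is what makes a coefficient of order $\frac{1}{\gamma}$ on the drift term possible.
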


\begin{lemma}[Mixing lemma]\label{lem:mixing}
Consider~\cref{alg:system}. Let $n\ge 2$, $s\ge 1$, $T\ge 1$, and $t\in\{0,\dots,T-1\}$. For the EL-Local interaction model (each node share their model with $s$ other nodes), we have:
\begin{enumerate}[label=(\alph*)]
    \item $\E[\bar x_{t+1}] = \E[\bar x_{t+1/2}]$,
    \item $\frac{1}{n^2}\sum_{i,j\in[n]}\E\norm{x_{t+1}^{(i)}-x_{t+1}^{(j)}}^2 \le \beta_s \,\frac{1}{n^2}\sum_{i,j\in[n]}\E\norm{x_{t+1/2}^{(i)}-x_{t+1/2}^{(j)}}^2,$
    \item $\E\norm{\bar x_{t+1}-\bar x_{t+1/2}}^2 \le \frac{\beta_s}{2n}\sum_{i,j\in[n]}\E \norm{x_{t+1/2}^{(i)}-x_{t+1/2}^{(j)}}^2,$
\end{enumerate}
\end{lemma}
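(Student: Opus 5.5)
The statement is the Mixing lemma, \cref{lem:mixing}. The plan is to reduce it coordinate-by-coordinate to the single-fragment Epidemic Learning mixing lemma of \cite{devos2023epidemic}, using the orthogonal decomposition $\sum_k \chunkproj{k}=I_d$.

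\textbf{Decomposition.} For any vector $v$ we have $\norm{v}^2=\sum_{k=1}^K\norm{\chunkproj{k}v}^2$, because the $\chunkproj{k}$ are disjoint orthogonal projectors. Both sides of (b) and (c) are therefore sums over $k$ of the same quantities with every model replaced by $\chunkproj{k}x$. The same holds for the averages: $\chunkproj{k}\bar x_{t+1}=\frac1n\sum_i\chunkproj{k}x_{t+1}^{(i)}$. By \eqref{eq:update rule chunks for algo}, fragment $k$ evolves as $\chunkproj{k}x_{t+1}^{(i)}=\sum_j W_t^{(k)}[i,j]\,\chunkproj{k}x_{t+\half}^{(j)}$.

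Fragment $k$ is thus exactly an EL-Local gossip step applied to the vectors $y^{(j)}:=\chunkproj{k}x_{t+\half}^{(j)}$, with matrix $W_t^{(k)}$. Each $W_t^{(k)}$ has the same distribution as the EL matrix: every node sends to $s$ uniformly random others and averages what it receives together with its own model. We should check that, conditioned on $\{x_{t+\half}^{(j)}\}_j$, the matrix $W_t^{(k)}$ is independent of the models. This holds because the gossip matrices are drawn fresh at line 4, independently of the local SGD steps.

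\textbf{Applying EL per fragment.} Conditioned on the half-step models, the EL mixing lemma applies to each fragment $k$. It gives:
\begin{itemize}
\item $\E[\bar y_{+}]=\bar y$;
\item the pairwise disagreement of fragment $k$ contracts by $\beta_s$;
\item $\E\norm{\bar y_+-\bar y}^2\le \frac{\beta_s}{2n}\sum_{i,j}\norm{y^{(i)}-y^{(j)}}^2$.
\end{itemize}
We then sum over $k=1,\dots,K$. For (a), we add $\chunkproj{k}$ components, which is linear. For (b) and (c), we add the fragment-wise squared norms, reassemble them into full norms by the Pythagorean identity, and take total expectation.

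Only marginal properties of each $W_t^{(k)}$ are used. Independence across $k$ is not needed, because expectation is linear and the norm splits additively without cross terms. This is why the bound is independent of $K$.

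\textbf{Main obstacle.} The key step is re-establishing the EL single-matrix bounds in our notation if they cannot simply be cited. The structure of $W_t[i,j]$ must be handled: the weights are $1/(1+|\text{in-neighbours of } i|)$ and are random, and the matrix is row- but not column-stochastic. The hard part is computing $\E[W^\top(I-\frac1n\one\one^\top)W]$ and $\E[(\frac1n\one^\top W-\frac1n\one^\top)^\top(\cdots)]$ via the in-degree distribution. That distribution is $\mathrm{Binomial}(n-1,s/(n-1))$, which is what produces the formula for $\beta_s$.

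I would first try to cite the EL paper's Lemma verbatim for a generic vector family, noting that its proof never uses the dimension $d$. Because that proof is dimension-free, it applies with $\R^d$ replaced by $\mathrm{range}(\chunkproj{k})$.
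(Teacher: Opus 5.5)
Your proposal is correct and follows the paper's proof essentially step for step. You split norms and network averages over the disjoint orthogonal projectors $\Pi^{(k)}$, treat fragment $k$ as an EL-Local gossip step with $W_t^{(k)}$, apply the EL-Local mixing lemma of \cite{devos2023epidemic} on each fragment subspace using only the marginal law of each $W_t^{(k)}$, and sum over $k$; in (c) the paper's per-fragment citation gives the sharper factor $\frac{\beta_s}{2n^3}$, which implies the stated $\frac{\beta_s}{2n}$ bound.
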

\begin{proof}
For $k=1,\dots,K$ remember that
\(\Pi^{(k)}:\R^d\to\R^d\) is the orthogonal projector that selects the coordinates in the $k$-th fragment. The projectors satisfy
\[
\Pi^{(k)}\Pi^{(q)}=0 \ (k\ne q),\qquad \sum_{k=1}^K \Pi^{(k)} = I_d.
\]

We will consider norms over fragments subspaces $\norm{.}_{\Pi^{(k)}}$ defined by the projectors, and apply existing results on those subspaces. Since $\sum_{k=1}^{K} \Pi^{(k)} = I_d$, any vector $x\in\R^d$ decomposes as
\[x = \sum_{k=1}^K \Pi^{(k)} x, \quad \text{and} \quad \norm{x}^2 = \sum_{k=1}^K \norm{x}_{\Pi^{(k)}}^2.\]

\textbf{(a)} Each per-fragment averaging step uses the matrix $W_t^{(k)}$ to produce
\(\Pi^{(k)} x_{t+1}^{(i)} = \sum_j W_t^{(k)}[i,j] \,\Pi^{(k)} x_{t+1/2}^{(j)}\).
Taking the network mean over $i$ and using linearity gives
\[
\bar x_{t+1} = \frac{1}{n}\sum_{i=1}^n x_{t+1}^{(i)} = \frac{1}{n}\sum_{j=1}^n \Big(\sum_{i=1}^n W_t^{(k)}[i,j]\Big)\,\Pi^{(k)} x_{t+1/2}^{(j)}
\]
for each fragment; because the randomized EL-Local interaction preserves the coordinate sums in expectation (see Lemma~1 of~\cite{devos2023epidemic} and its discussion), and by linearity of the expectation, we obtain
\(\E[\bar x_{t+1}] = \E[\bar x_{t+1/2}]\).

\textbf{(b)} Since the fragments partition the coordinates, the pairwise disagreement decomposes over fragments:
\begin{align*}
\frac{1}{n^2}&\sum_{i,j=1}^n \E\|x_{t+1}^{(i)}-x_{t+1}^{(j)}\|^2
\\&= \frac{1}{n^2}\sum_{i,j}\E\norm{\sum_{k=1}^K \Pi^{(k)}\big(x_{t+1}^{(i)}-x_{t+1}^{(j)}\big)}^2
\\&= \frac{1}{n^2}\sum_{i,j}\E\norm{\sum_{k=1}^K \big(x_{t+1}^{(i)}-x_{t+1}^{(j)}\big)}_{\Pi^{(k)}}^2.
\end{align*}

Fix a fragment index $k$. Restricting the dynamics to the coordinates selected by $\Pi^{(k)}$, the gossiping is performed with $W_t^{(k)}$. Applying Lemma~1 of~\cite{devos2023epidemic} (EL-Local case) to those subspaces yields:
\begin{align*}
    \frac{1}{n^2}&\sum_{i,j}\E\norm{x_{t+1}^{(i)}-x_{t+1}^{(j)}}_{\Pi^{(k)}}^2 
    \\ &\le \beta_s\,\frac{1}{n^2}\sum_{i,j}\E\norm{x_{t+1/2}^{(i)}-x_{t+1/2}^{(j)}}_{\Pi^{(k)}}^2.
\end{align*}
Summing this inequality for $k=1,\dots,K$ yields statement (b).

\textbf{(c)} We have the decomposition:
\[
\bar x_{t+1}-\bar x_{t+1/2} = \sum_{k=1}^K \frac{1}{n}\sum_{i=1}^n \big(\Pi^{(k)}x_{t+1}^{(i)} - \Pi^{(k)}x_{t+1/2}^{(i)}\big).
\]

Thus, we can write:
\begin{align*}
\E&\norm{\bar x_{t+1}-\bar x_{t+1/2}}^2 
\\&= \E\norm{\sum_{k=1}^K \frac{1}{n}\sum_{i=1}^n \big(\Pi^{(k)}x_{t+1}^{(i)} - \Pi^{(k)}x_{t+1/2}^{(i)}\big)}^2
\\&= \sum_{k'=1}^{K}\E\norm{\sum_{k=1}^K\Pi^{(k)} \frac{1}{n}\sum_{i=1}^n \big(x_{t+1}^{(i)} - x_{t+1/2}^{(i)}\big)}_{\Pi^{(k')}}^2
\\&= \sum_{k=1}^{K}\E\norm{\frac{1}{n}\sum_{i=1}^n \big(x_{t+1}^{(i)} - x_{t+1/2}^{(i)}\big)}_{\Pi^{(k)}}^2,
\end{align*}
where we used the fact that different fragments are orthogonal.
Applying Lemma~1 of~\cite{devos2023epidemic} (EL-Local case) to each $\sum_{k=1}^{K}\E\norm{\frac{1}{n}\sum_{i=1}^n \big(x_{t+1}^{(i)} - x_{t+1/2}^{(i)}\big)}_{\Pi^{(k)}}^2$, we obtain:
\begin{align*}
    \E&\norm{\bar x_{t+1}-\bar x_{t+1/2}}^2 
    \\&\le \sum_{k=1}^{K} \frac{\beta_s}{2n^3}\sum_{i,j=1}^n \E\norm{x_{t+1/2}^{(i)} - x_{t+1/2}^{(j)}}_{\Pi^{(k)}}^2
    \\&= \frac{\beta_s}{2n^3}\sum_{i,j=1}^n \E\norm{x_{t+1/2}^{(i)} - x_{t+1/2}^{(j)}}^2,
\end{align*}
which concludes the proof.
\end{proof}

Now that all intermediary lemmas are stated, we can proceed to the main proof.
\convergenceEpidemic*

\begin{proof}
The proof of the main theorem in \cite{devos2023epidemic} does not rely on the gossip parameters, but only on the intermediary lemmas stated above. Since we have shown those lemmas hold in our setting, we can perform the exact same proof steps to derive the desired result. We refer the reader to \cite{devos2023epidemic} for the full proof details.
\end{proof}

\subsection{Proof of \cref{thm:consensus error evolution}}
We restate and prove \Cref{thm:consensus error evolution} below. Recall that we use the notation from \Cref{subsec:theory impact of chunks}.

\vectorizedconvergence*

\begin{proof}[Proof of \Cref{thm:consensus error evolution}]
        Define the projector for consensus error:
        \[P := \left(\identity{n} - \tfrac{1}{n}\one_n\one_n^\top\right)\kronecker I_d,
        \]
        so that the consensus error is written as $e_t := PX_t = X_t-\bar{X}_t$. Note that
        $PX^*=0$ because $X^*=\one_n\kronecker x^*$ is a consensus vector.
        Using the update equations~\cref{eq:vectorized gossip update,eq:vectorized gradient update}, we have:
        \begin{align*}
                e_{t+1} &= P X_{t+1} \\
                &= P\,\commutingmatrix{n}{d}\mathbf{W}_t\commutingmatrix{d}{n}\Big(X_t-2\eta\,(I_n\kronecker A)(X_t-X^*)\Big).
        \end{align*}

        Decompose the error into disagreement and consensus parts:
        \[X_t-X^*=e_t+\big(\bar X_t-X^*\big),\qquad
            \bar X_t-X^*=\one_n\kronecker(\bar x_t-x^*),
        \]
        so the second summand is a consensus vector. By assumption each gossip block satisfies
        $W_t^{(k)}\,\one_n=\one_n$, hence
        $\commutingmatrix{n}{d}\mathbf{W}_t\commutingmatrix{d}{n}$ maps consensus vectors to consensus vectors. In particular
        \[(I_n\kronecker A)(\bar X_t-X^*)=\one_n\kronecker\big(A(\bar x_t-x^*)\big)
        \]
        is a consensus vector, and therefore
        \[P\,\commutingmatrix{n}{d}\mathbf{W}_t\commutingmatrix{d}{n}\,(I_n\kronecker A)(\bar X_t-X^*)=0.
        \]
        Using linearity and the decomposition above we obtain
        \begin{align*}
                e_{t+1}
                &= P\,\commutingmatrix{n}{d}\mathbf{W}_t\commutingmatrix{d}{n} e_t
                \\&-2\eta\,P\,\commutingmatrix{n}{d}\mathbf{W}_t\commutingmatrix{d}{n}(I_n\kronecker A) e_t,
                \\&= P\underbrace{\commutingmatrix{n}{d}\mathbf{W}_t\commutingmatrix{d}{n} \left(\identity{nd} -2\eta(I_n\kronecker A)\right)}_{:=M_t} e_t,
        \end{align*}
        since the term coming from $(\bar X_t-X^*)$ vanishes after left-multiplication by
        $P\,\commutingmatrix{n}{d}\mathbf{W}_t\commutingmatrix{d}{n}$.
        Finally, factorizing by $P\,\commutingmatrix{n}{d}\mathbf{W}_t\commutingmatrix{d}{n}$ yields:
        \begin{align*}
                e_{t+1}
                &= P\,\commutingmatrix{n}{d}\mathbf{W}_t\commutingmatrix{d}{n}\left(I_n\kronecker \left(I_{d}-2\eta A\right)\right)e_t.
        \end{align*}

\end{proof} 
\end{document}